\documentclass{article}
\usepackage{xcolor}
\usepackage{float}

\usepackage[preprint]{corl_2026}

\usepackage{graphicx}
\usepackage{booktabs}
\usepackage{tabularx}
\usepackage{needspace}
\usepackage{amsmath}
\usepackage{amssymb}
\usepackage{amsthm}
\usepackage{algorithm}
\usepackage{algpseudocode}
\algnewcommand{\IfThen}[2]{\State \algorithmicif\ #1\ \algorithmicthen\ #2}
\newtheorem{proposition}{Proposition}
\newtheorem{lemma}{Lemma}
\usepackage{tikz}
\usetikzlibrary{positioning,arrows.meta,fit,calc}

\newcommand{\sense}{\textup{\textsc{Sense}}}
\newcommand{\plan}{\textup{\textsc{Plan}}}
\newcommand{\act}{\textup{\textsc{Act}}}
\newcommand{\spa}{SPA}
\newcommand{\rsi}{RSI}

\newcommand{\dworld}{\mbox{D-World}}
\title{D-World: An Explicit World-Language-Action Model\\with Recursive Self-Improvement under Dual Control}
\hypersetup{pdftitle={D-World: An Explicit World-Language-Action Model with Recursive Self-Improvement under Dual Control}, pdfauthor={Xuan Wen and Baichao Wang}, pdfsubject={}}

\author{
  Xuan Wen\\
  Geek+\\
  \texttt{wenxuan23@mails.ucas.ac.cn}
  \And
  Baichao Wang\\
  Geek+\\
  \texttt{baichao.wang@geekplus.com}
}

\begin{document}
\raggedbottom
\brokenpenalty=10000
\maketitle

\begin{abstract}
Frontier large language models (LLMs) now reason about space well enough to program robots, yet the world models behind robot policies are implicit, held in network weights or predicted video, and cannot be inspected or corrected directly.
We present \dworld{}, an explicit world-language-action model built on the Sense, Plan and Act decomposition.
A perception agent writes the scene as an executable digital twin using generated vision and geometry code.
A planning agent searches for actions in simulation and, after each executed action chunk, revises the twin's object states and named physical parameters against what the robot measured.
A parameter revision is accepted only if replaying the latest chunk does not increase any measured residual beyond its tolerance, and every chunk is checked in the updated twin before it runs. Structural mismatches return to perception for reconstruction.
We call this measurement-grounded cycle of correction and revision recursive self-improvement (RSI), in which the LLM stays frozen while the world it acts in improves.
We model the choice of actions that both advance the task and expose model error as a dual control problem and show that the cost of incorporating evidence into the twin sets the threshold at which probing pays, so cheap local revision counteracts the turn-off of cautious controllers.
Each episode runs a real-to-sim-to-real loop, and its recorded twins can be refitted and re-rendered offline into higher-fidelity environments for post-training.
Because perception and planning communicate only through the twin and language, either can be replaced by a learned model, and since inference dominates the loop, the loop speeds up with faster models.
On a dual-arm robot, towel and toaster tasks show local revisions, such as one reducing the bread's rendered-depth residual from 22.4 to 1.5\,mm, that let execution continue without reconstruction.
\end{abstract}

\keywords{World-language-action models, explicit world models, digital twins, recursive self-improvement, dual control, real-to-sim-to-real, coding agents, Sense, Plan and Act}

\begin{figure}[t]
\centering
\includegraphics[width=\linewidth]{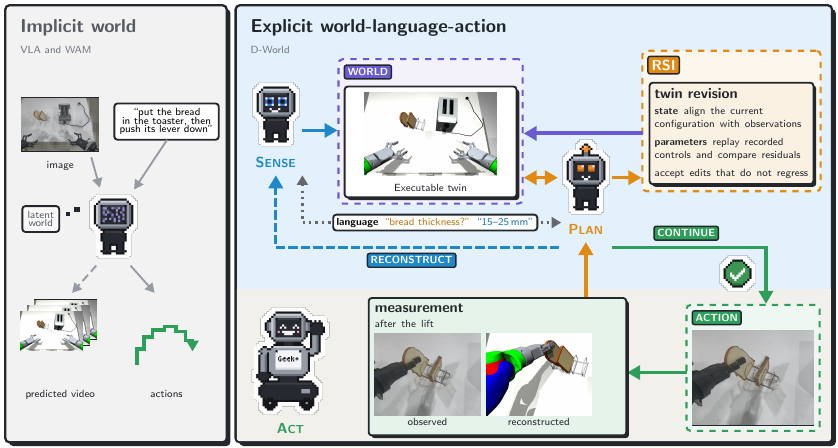}
\caption{\textbf{\dworld{}, an explicit world-language-action model.} Implicit models (left) keep the world in network weights or predicted video. \dworld{} (right) writes it as an executable twin. \sense{} constructs the twin, \plan{} proposes trajectories and twin revisions, and a gate checks each action chunk before execution. Measurements of the real robot drive state correction and replay-checked parameter revision (\rsi{}), and the dashed path requests reconstruction. In a bread-grasp episode, the twin shows both grippers open, ACTION shows the robot holding a lifted bread slice, and the measurement pair compares this post-lift observation with the reconstructed twin. In the language channel, \plan{} asks and \sense{} answers.}
\label{fig:overview}
\end{figure}

\section{Introduction}
\label{sec:intro}

Frontier multimodal LLMs now reason about scene geometry and spatial relations well enough to write robot programs that compose perception and control primitives~\citep{liang2023cap,fu2026capx}, or to act on a robot turn by turn~\citep{zhang2026astrapolicy,galbot2026astra}.
This makes agentic robotics practical, but it leaves open where the agent's predictions about the physical consequences of its actions come from.
Precise grasps, contact-rich motion and bimanual coordination remain difficult when an error becomes visible only after execution~\citep{chen2026robodojo,galbot2026astra}.

World-action and world-language-action models address this by predicting the future together with the action~\citep{ye2026dreamzero,lu2026wamsurvey,yang2026wla}.
Their world, however, is implicit and resides in network weights or predicted video.
When such a prediction disagrees with the scene, the disagreement cannot be traced to a particular quantity, and correcting it requires new data and training.
An explicit world has the complementary property.
A coding agent can write a physical scene as a program that a simulator executes~\citep{wang2026codeasworld}, so that every prediction follows from named geometry and physical parameters.
A reconstruction written in one pass still disagrees with the scene~\citep{tang2026video2world,tang2026bvb}, yet each disagreement now points to fields that can be edited and tested.

Building on this property, we call a system an \emph{explicit world-language-action model} when its world is an executable program that can be inspected and revised, language is the medium in which agents write, revise and question that world, and every action is checked in that world before execution. Unlike world-language-action models that unify the three in a single network~\citep{yang2026wla}, such a system keeps the world explicit.
\dworld{} is an explicit world-language-action model organized around the classical Sense, Plan and Act (\spa{}) decomposition~\citep{nilsson1984shakey} (Fig.~\ref{fig:overview}).
Its world is an executable digital twin, hereafter the twin, which a perception module, \sense{}, writes from the robot's cameras.
A planning module, \plan{}, searches for trajectories by simulation, commits a short prefix, or \emph{chunk}, for execution, and compares the measured response with its prediction.
Classical \spa{} was criticized because its model drifts from the world~\citep{brooks1986robust,brooks1991intelligence}. \dworld{} instead treats this drift as the signal from which the model is improved.

After each chunk, \plan{} explains the measured discrepancy by correcting object states or revising named parameters such as dimensions and friction.
A harness accepts a parameter revision only if replaying the latest chunk in the revised twin does not increase any measured residual beyond its tolerance, and it then gates the next chunk in the updated twin.
Structural mismatches return to \sense{} for reconstruction.
We call this measurement-grounded cycle of correction and revision \emph{recursive self-improvement} (\rsi{}), with reconstruction as its structural fallback. The LLM remains frozen, while the world in which it plans improves.
Each episode thereby runs a real-to-sim-to-real loop~\citep{torne2024rialto} at the granularity of action chunks, and the recorded twins can afterwards be refitted and re-rendered offline into higher-fidelity environments for post-training.

Because the twin improves through action, the choice of action also determines what can be learned.
Dual control formalizes this coupling between control and estimation~\citep{feldbaum1960dual,barshalom1974dual}.
We treat the twin, together with the set of parameter values that remain consistent with the measurements, as the belief of the loop. Each chunk both advances the task and changes this belief.
A controller that only avoids uncertain contact never reduces the uncertainty that motivated its caution, which is the turn-off phenomenon of adaptive control~\citep{astrom1971problems,wittenmark1995dual}.
We show that probing pays below a success-probability threshold that falls with the cost of incorporating the probe's outcome into the model, so local revision, which absorbs that outcome in one replay-checked edit instead of a full reconstruction, widens the range of beliefs in which the planner should probe rather than wait.

Finally, \sense{} and \plan{} communicate only through the explicit twin with its uncertainty ranges, its revisions, the measured evidence, and notes and questions in language.
Either module can therefore be replaced by a learned reconstruction model or policy without changing the checks, and since model inference dominates the loop's running time, the loop becomes faster and more accurate as the underlying models improve.

\Needspace{5\baselineskip}
Our contributions are as follows.
\begin{itemize}
\setlength\itemsep{0pt}
\item \textbf{An explicit world-language-action model.} \dworld{} represents the world as an executable twin that a frozen LLM agent writes from robot observations, and it plans, checks and acts in this twin, with perception and planning decoupled through it.
\item \textbf{Recursive self-improvement within an episode.} Measured discrepancies drive state corrections and replay-checked parameter revisions, with reconstruction reserved for structural change. A dual-control formulation characterizes informative chunks by the measurement tolerance and shows that inexpensive local revision raises the success probability below which probing pays, counteracting turn-off.
\item \textbf{A harness for physical agents, with real-robot evidence.} The harness manages the agents' contexts and tools, verifies every revision and chunk, and keeps the execution records. On a dual-arm robot, towel and toaster tasks show revision, reconstruction and continued execution, and the episode records support offline refitting of higher-fidelity twins.
\end{itemize}

\section{Related Work}
\label{sec:related}

\paragraph{Implicit and explicit world models.}
Vision-language-action models map observations and language directly to actions~\citep{pi2025pi05,nvidia2025groot}.
World-action models additionally predict future observations~\citep{ye2026dreamzero,yuan2026fastwam,lu2026wamsurvey}, and world-language-action models unify world modeling, language reasoning and action synthesis in a single network~\citep{yang2026wla}.
In these models the world is implicit.
Explicit alternatives represent dynamics as programs refined from interaction~\citep{tang2024worldcoder,dainese2024gifmcts}, build object-level world models for zero-shot manipulation~\citep{li2026explicitwm}, or let coding agents construct physical scenes in executable form~\citep{li2026rcwm,wang2026codeasworld,chen2026codeworldmodel}.
VisualPatchWorld fits a dynamics program from probes and plans with it~\citep{bai2026visualpatchworld}.
\dworld{} pairs an executable world with language-model agents and closes the loop through measurements of the real robot.

\paragraph{Real-to-sim-to-real.}
Reconstructed scenes support rollout-based planning~\citep{ning2025pwtf,liu2026simpact}, twin synchronization for a learned policy~\citep{abouchakra2025realissim} and policy training~\citep{torne2024rialto,wang2026phys2real}.
LLM-based reconstruction and policy-development systems use feedback between real trials~\citep{hu2026simex}, offline replays~\citep{chen2026agenticreal2sim} or workspace observations~\citep{li2026agenticrsr}.
\dworld{} runs the real-to-sim-to-real cycle within an episode. Each executed chunk returns measurements, a replay check accepts revisions, and the updated twin plans and gates the next chunk.

\paragraph{Robot agents and harnesses.}
LLMs control robots through programs~\citep{liang2023cap,huang2023voxposer,huang2024rekep}, coding-agent workflows~\citep{fu2026capx,chu2026openrua} or turn-by-turn interaction~\citep{tsui2026faea,zhang2026astrapolicy}, and their perception may compose pretrained modules~\citep{suris2023vipergpt} or use agent-written measurement code~\citep{talwar2026localcoding,chu2026openrua}.
The harness of such an agent, comprising its prompts, tools, context management and control rules, largely determines how well it performs~\citep{han2026vista,chu2026openrua,wei2026harnessevolution}. VISTA supports a general-purpose multimodal model through visual memory and model-directed inspection~\citep{han2026vista}, SafeHarness adds verified obstacle-aware route planning~\citep{xu2026safeharness}, and RegenHarness gates task-state commits on evidence and revises its own configuration under regression checks~\citep{wang2026regenharness}.
Since an LLM's own judgment is an unreliable verifier~\citep{kambhampati2024llmmodulo}, robot systems check kinematics and collisions~\citep{hong2026bridging} or monitor execution in images~\citep{zhou2025codeasmonitor}.
Our harness centers these functions on an editable physical twin. Its execution gate follows the model-based principle of predictive safety filters~\citep{wabersich2018psf}, applied to planner-specified predicates in the nominal twin.

\paragraph{Self-improvement, identification and dual control.}
Agents accumulate skills~\citep{wang2023voyager}, revise their work from observations and failures~\citep{yao2023react,shinn2023reflexion}, and improve harnesses or policies across tasks~\citep{lee2026rhsi,xia2026rrsi,lang2026embodirsi,song2026embodiedrsi}, including evidence-gated revision of a robot harness~\citep{wang2026regenharness}. Such improvement is bounded by the quality of its feedback~\citep{chen2026rsisurvey,wang2026whatstops}.
In \dworld{}, the object being improved is the world model rather than the harness or the policy, and the feedback is physical measurement.
Parameter revision is therefore a form of system identification, related to LLM-guided fitting through physics engines~\citep{cherian2026llmphy,wang2026phys2real} and refinement from prediction errors~\citep{zhang2026worldevolver}.
EMPIRIC extends a physics engine with residual code and selects experiments by their information~\citep{liang2026empiric}, whereas \dworld{} revises the parameters of a supported scene representation and leaves structural change to reconstruction.
Dual control accounts for how actions affect both the system and future information~\citep{feldbaum1960dual,barshalom1974dual,barshalom1976caution,wittenmark1995dual}. Practical approaches use sampling~\citep{mesbah2018dual}, set-membership models~\citep{lorenzen2019rmpc} or shields~\citep{hu2024dualshield}, and robotic probes can reveal intent or physical properties~\citep{sadigh2016info,memmel2024asid}.

\section{Problem Formulation}
\label{sec:problem}

\paragraph{Setting and representation.}
A robot acts on a language task $\ell$ in an unfamiliar scene, using a supplied kinematic model, controller and camera parameters.
An observation $o$ contains images, available depth and measured robot state.
An action specifies bounded end-effector and gripper targets for one or both arms.
The twin is $\hat{\mathcal T}_t=(\mathcal M_t,\hat\theta_t,\hat x_t)$, where $\mathcal M_t$ specifies bodies, geometry types and connectivity, $\hat\theta_t$ contains dimensions, masses, friction, contact parameters and fixed-body poses, and $\hat x_t$ contains movable-body poses and velocities and articulated-joint states.
\sense{} constructs this representation from observations.
\plan{} can revise supported parameter and state fields, whereas changes to body types or connectivity require reconstruction.
Planning and execution share the robot model and controller, although their simulated response can differ from the hardware.

\paragraph{Measurements and residuals.}
The index $t$ counts decision rounds, including rounds in which the gate refuses execution.
A proposed chunk $u_t$ is the prefix of a trajectory $\tau_t$ that \plan{} commits for execution.
If executed, it produces $y_t$, which comprises proprioception after each action, requested camera captures, and an observation at the chunk's end. A refused round produces no $y_t$.
The history $I_t$ contains the observations and executed-chunk records available before round $t$.
Each record $\mathcal D$ stores the starting twin and robot state, controls and measured response, and a static record contains an observation alone. The record of the chunk executed in round $t$ is $\mathcal D_t$.
For each measured signal $i$, the harness evaluates the residual entry
\begin{equation}
e_i(\mathcal T;\mathcal D)=d_i\big(y_i(\mathcal D),\hat y_i(\mathcal T;\mathcal D)\big),
\label{eq:error}
\end{equation}
where $\hat y_i$ is the prediction from the recorded controls and starting state, or a rendering for a static record, and $d_i$ is the corresponding discrepancy measure.
The entries cover end-effector paths, gripper closure, first resistance and rendered-depth residuals (App.~\ref{app:error}).
Their operational tolerances are derived from sensor specifications, observed noise or free-motion discrepancies.

\paragraph{Revision acceptance.}
Let $\bar{\mathcal D}_t$ be the latest record available at round $t$, namely the latest executed-chunk record since reconstruction, or the reconstruction observation if no chunk has executed since then.
The same record is used before and after a proposed revision $\Delta$, a collection of named field assignments rather than an additive parameter vector. A revision can contain state corrections, parameter changes, or both.
Write $e_i^{\mathrm{rep}}(\Delta;\bar{\mathcal D}_t)$ for the residual after applying its replayable changes to the recorded twin, including previously accepted changes applicable to that record.
The baseline $e_i^{\mathrm{rep}}(\varnothing;\bar{\mathcal D}_t)$ applies no new change.
An accepted revision must satisfy
\begin{equation}
e_i^{\mathrm{rep}}(\Delta;\bar{\mathcal D}_t)\le e_i^{\mathrm{rep}}(\varnothing;\bar{\mathcal D}_t)+\varepsilon_{i,t}
\quad\text{for all }i\in\mathcal E_t(\Delta),
\label{eq:accept}
\end{equation}
where $\mathcal E_t(\Delta)$ contains entries available in both comparisons with a defined tolerance, and $\varepsilon_{i,t}$ is fixed from the baseline comparison.
The set can depend on the candidate because rendered visibility can change, and a missing entry imposes no constraint.
Depth is evaluated at the replay's endpoint, or on the available depth for a static record.
Corrections to the pose of a body moved by the chunk, and to articulated-joint state, are excluded from replay, because they synchronize the current twin rather than explain the preceding transition. Pose corrections of undisturbed bodies are replayed.
The test thus permits changes within tolerance and enforces non-regression on the latest evidence. Effects on earlier chunks are reported to \plan{} without vetoing an update.

\paragraph{Execution gate.}
Each chunk carries expectations $\mathcal P_t$ expressed as predicates on its simulated outcome, including the movable bodies it affects.
The binary gate $\mathrm G(\hat{\mathcal T}_t^+,\delta_t)$ equals one exactly when the proposed motion respects the robot limits, the chunk's expectations hold and, if \plan{} declares the task done, its task predicates hold over the proposed trajectory. Here $\hat{\mathcal T}_t^+$ denotes the twin after revision selection and $\delta_t$ the full decision defined below.
The harness first selects the accepted revisions, possibly none, and then gates the plan in the resulting twin, so that a refused revision does not by itself refuse the action plan.
The gate certifies a chunk in the nominal simulation, which uses the current point estimates. Whether the scene follows the simulation is what the subsequent measurement tests, and whether the predicates capture the task is what \sense{}'s completion check tests.

\paragraph{Conditional consistency sets.}
To represent parameter uncertainty, let $\bar\Theta_{k,t}$ be the candidate interval for scalar parameter $\theta_k$, spanning \sense{}'s range and the values proposed since reconstruction.
For a candidate $v$, $e_{ij}^{(k)}(v;t)$ denotes residual $i$ on record $j$ when $\theta_k$ is varied, the other parameters are held at their current estimates, and replay starts from the recorded states.
The coordinate-wise conditional set is
\begin{equation}
\Theta_k(t)=\left\{v\in\bar\Theta_{k,t}:
 e_{ij}^{(k)}(v;t)\le\inf_{w\in\bar\Theta_{k,t}}e_{ij}^{(k)}(w;t)+\varepsilon_{ij,t}
 \quad\forall(i,j)\in\mathcal J_{k,t}\right\},
\label{eq:open}
\end{equation}
where $\mathcal J_{k,t}$ indexes comparisons defined across the candidate domain, and $\varepsilon_{ij,t}$ is the corresponding comparison tolerance.
The implementation evaluates this set on a finite grid, using motion since reconstruction and depth from the latest observation (App.~\ref{app:error}).
It reports intervals $\widetilde\Theta_k(t)$ whose product $\widetilde{\mathcal W}_t=\prod_k\widetilde\Theta_k(t)$ approximates the belief set $\mathcal W_t$ defined below and supplies parameter vectors for instantiating candidate twins.
This coordinate-wise diagnostic is related to set-membership estimation~\citep{milanese1991setmembership}. Unlike profile likelihood~\citep{raue2009identifiability}, it holds the other parameters at their estimates, so parameter coupling can make $\widetilde{\mathcal W}_t$ differ from the joint set $\mathcal W_t$ (App.~\ref{app:error}).
A reconstruction resets these intervals.

\paragraph{Decisions.}
The planning policy produces
\begin{equation}
\delta_t=\mu_\pi(\ell,I_t,\hat{\mathcal T}_t,h_t),
\label{eq:plan}
\end{equation}
where $\pi$ is the frozen LLM and $h_t$ contains the modules' notes and scripts.
The decision $\delta_t$ contains the trajectory $\tau_t$, chunk $u_t$, proposed revisions $\Delta_t$, predicates $\mathcal P_t$ and next step $\nu_t\in\{\textsc{continue},\textsc{reconstruct},\textsc{done}\}$, and it may also supply notes and future chunks.
Each evaluation of $\mu_\pi$ is a \emph{planning call}, within which \plan{} may issue many LLM requests and tool commands. A round makes one planning call and up to $R$ repair calls, or none when a listed chunk continues.
After revision selection and gating, the robot executes $u_t$, and the twin advances to the simulated endpoint when measured motion agrees, or by simulation along the measured motion otherwise.
If no call yields a passing chunk, the round executes nothing, and two consecutive such rounds terminate the episode.
A successful status requires \sense{}'s confirmation on a fresh observation. Other stopping conditions are resource limits and an operator stop.
Listed future chunks can bypass a new planning call when the continuation checks pass (App.~\ref{app:harness}), but each still passes the gate.

\paragraph{Cost.}
The time of a round decomposes as
\begin{equation}
T_t=T_{P,t}+T_{A,t}+T_{S,t}+T_{H,t},
\label{eq:cost}
\end{equation}
where the terms partition the round's critical path into planning (including repairs), execution, sensing, and other harness operations or waiting, with concurrent work counted once.
Sensing time is charged when initialization, reconstruction or completion checking occurs.

\paragraph{Belief and its update.}
The loop carries the belief $b_t=(\hat{\mathcal T}_t,\mathcal W_t,\rho_t)$, which consists of the updated twin, the parameter set $\mathcal W_t$, and a distribution $\rho_t$ supported on $\mathcal W_t$ from which \plan{} samples candidate twins.
An executed chunk updates it through
\begin{equation}
b_{t+1}=\beta(b_t,u_t,y_t),\qquad
\mathcal W_{t+1}=\big\{\theta\in\bar\Theta_{t+1}:\ d_i\big(y_{i,j},\hat y_i(\theta;\mathcal D_j)\big)\le\varepsilon_{i,j}\ \ \forall i\in\mathcal Y_j,\ \forall j\in\mathcal R_{t+1}\big\},
\label{eq:belief}
\end{equation}
where $\bar\Theta_{t+1}=\prod_k\bar\Theta_{k,t+1}$ is the candidate domain and $\mathcal R_{t+1}$ contains the records since reconstruction, the latest being that of $u_t$.
Here $\hat y_i(\theta;\mathcal D_j)$ is the prediction obtained by replaying the controls of record $\mathcal D_j$ from its starting state in the twin with the structure since reconstruction and parameters $\theta$, and $\mathcal Y_j$ contains the entries measured in that record.
The tolerances $\varepsilon_{i,j}$ are the operational tolerances, applied to the discrepancy between measurement and prediction rather than to the increase of a residual.
The point estimate $\hat{\mathcal T}_{t+1}$ is obtained by \rsi{} under Eq.~\eqref{eq:accept} and advanced to the end of the chunk. The distribution $\rho_{t+1}$ is $\rho_t$ conditioned on $\mathcal W_{t+1}$ when the candidate domain is unchanged and $\rho_t(\mathcal W_{t+1})>0$, and otherwise a distribution reinitialized on $\mathcal W_{t+1}$, which exists whenever $\mathcal W_{t+1}$ is nonempty.
When the candidate domain is unchanged, Eq.~\eqref{eq:belief} reduces to the recursion $\mathcal W_{t+1}=\{\theta\in\mathcal W_t:\ d_i(y_{i,t},\hat y_i(\theta;\mathcal D_t))\le\varepsilon_{i,t}\ \forall i\in\mathcal Y_t\}$, so the set can only shrink.
An empty $\mathcal W_{t+1}$ means that no parameter vector in the candidate domain explains the evidence within tolerance, so that only an enlarged domain or a reconstruction can restore consistency.
\rsi{} is the point-estimate part of $\beta$, and Eq.~\eqref{eq:belief} its set-valued part. The point estimate need not lie in $\mathcal W_{t+1}$, because acceptance requires non-regression rather than consistency.
The implementation approximates $\mathcal W_{t+1}$ by the coordinate-wise scan of Eq.~\eqref{eq:open}, which relaxes each tolerance by the best grid candidate's residual and returns the full range, marked uninformative, when no candidate passes (App.~\ref{app:error}).

\paragraph{Objective.}
The decision problem underlying \dworld{} is to complete the task in the least expected time,
\begin{equation}
\min_{\mu}\ \mathbb E^{\mu}\Big[\textstyle\sum_{t'=t}^{K}T_{t'}\ \Big|\ b_t\Big],
\label{eq:dual}
\end{equation}
where $K$ is the round in which completion is confirmed, and the expectation under policy $\mu$ is over physical parameters $\theta\sim\rho_t$ together with the resulting measurements.
Because $u_t$ influences $y_t$ and hence $\mathcal W_{t+1}$ through Eq.~\eqref{eq:belief}, the control affects not only the scene but also the uncertainty of future beliefs. This is the dual effect~\citep{feldbaum1960dual,barshalom1974dual}, and it makes Eq.~\eqref{eq:dual} a dual control problem in which an action is valuable both for the progress it makes and for the uncertainty it removes.
\dworld{} does not solve Eq.~\eqref{eq:dual} numerically. \plan{} approximates it by reasoning over twins sampled from $\rho_t$, with the confidence $c_t$ of Sec.~\ref{sec:plan} as its estimate of the probability that a chunk succeeds, while the harness enforces the acceptance and gate conditions above.

\section{D-World}
\label{sec:method}

\sense{}, \plan{} and the harness realize the loop of Sec.~\ref{sec:problem} as summarized in Algorithm~\ref{alg:spa}.
Each module runs as a coding agent with its own context, and the two exchange the twin, revisions and measurement evidence through the harness.

\begin{algorithm}[t]
\caption{The \dworld{} loop. $\bar{\mathcal D}$ is the latest record since reconstruction, $A(S)$ holds when the revisions $S$ satisfy Eq.~\eqref{eq:accept} on $\bar{\mathcal D}$, and $\oplus$ applies them. Refusals and \sense{}'s objections are returned to \plan{} through $h$, revisions take effect with the executed chunk, and two consecutive rounds without an executed chunk end the episode.}
\label{alg:spa}
\small
\begin{algorithmic}[1]
\Require task $\ell$, repair budget $R$
\State $\hat{\mathcal T}\gets\textsc{Sense}_\pi(\textsc{Observe}(),\ell)$\quad $I,h\gets\varnothing$\quad $\nu\gets\textsc{continue}$
\Loop
  \IfThen{$\nu=\textsc{reconstruct}$ or the operator interrupts}{$\hat{\mathcal T}\gets\textsc{Sense}_\pi(\textsc{Observe}(),\ell,\hat{\mathcal T},h)$\quad $\nu\gets\textsc{continue}$}
  \State $(\delta,\hat{\mathcal T}^{+})\gets$ the next listed chunk and $\hat{\mathcal T}$, if it passes continuation and $\mathrm G$, else $\bot$ \Comment{no planning call}
  \For{$a=0,\ldots,R$ \textbf{while} $\delta=\bot$} \Comment{\plan{}, $a>0$ is a repair}
    \State $\delta\gets\mu_\pi(\ell,I,\hat{\mathcal T},h)$\quad $\hat{\mathcal T}^{+}\gets\Call{RSI}{\hat{\mathcal T},\delta.\Delta}$
    \IfThen{$\mathrm G(\hat{\mathcal T}^{+},\delta)=0$}{$\delta\gets\bot$}
  \EndFor
  \IfThen{$\delta=\bot$}{\textbf{continue}} \Comment{nothing executes}
  \State $y\gets\textsc{Robot}(\delta.u)$\quad $I\gets\big(I,(\hat{\mathcal T}^{+},\delta.u,y)\big)$\quad $\hat{\mathcal T}\gets\textsc{Advance}(\hat{\mathcal T}^{+},\delta.u,y)$\quad $\nu\gets\delta.\nu$ \Comment{\act{}}
  \IfThen{$\nu=\textsc{done}$ and $\textsc{Check}_\pi(\textsc{Observe}(),\ell)$}{\Return success} \Comment{\sense{}}
\EndLoop
\Statex
\Function{RSI}{$\hat{\mathcal T},\Delta$} \Comment{$\Delta^{\mathrm{ind}}$ holds the changes that pass alone}
  \State \Return $\hat{\mathcal T}\oplus\Delta$ if $A(\Delta)$, else $\hat{\mathcal T}\oplus\Delta^{\mathrm{ind}}$ if $A(\Delta^{\mathrm{ind}})$, else $\hat{\mathcal T}$
\EndFunction
\end{algorithmic}
\end{algorithm}

\subsection{Writing the Twin}
\label{sec:sense}
\sense{} receives the images, the instruction, the robot's state and the robot model with its cameras. It writes the twin as an executable simulation model covering what the robot could reach or collide with, together with the surfaces these objects rest on.
The multimodal agent interprets the images and writes classical vision and geometry code to measure them, without additional pretrained perception modules.
It renders its draft from every camera for comparison with the images and the wrist depth, and a physics check reports what the draft implies before anything runs, such as interpenetrating or unsupported bodies, or bodies that move while the robot holds still.
Corrections must come from the images rather than from moving a body to pass the check.
With every estimate, \sense{} lists the quantities it is uncertain about, with reasons and with ranges that a sampled twin can instantiate, and these ranges initialize the intervals $\bar\Theta_{k,t}$.
It answers \plan{}'s questions and may request a viewpoint, which \plan{} reaches with a gated chunk. Its scripts and notes are passed to \plan{}, which can rerun and extend its measurements.

\subsection{Planning and Acting}
\label{sec:plan}
\plan{} performs LLM-guided programmatic search. It writes trajectory candidates, simulates them, inspects the resulting motion and contacts, and uses this feedback to propose further candidates within the rollout budget (App.~\ref{app:constants}).
Among the candidates it considers, \plan{} selects the action it judges most likely to advance and ultimately complete the task on the real robot, taking into account the errors that may remain in the twin.
This judgment combines simulated motion and contacts, real observations, measured residuals and \sense{}'s reported uncertainties, evidence that no single fixed cost function encodes.
With each chunk, \plan{} states a \emph{confidence} $c_t\in[0,1]$, its estimate, after the rollouts and given the residuals the twin still shows, of how likely the chunk is to achieve its expectations $\mathcal P_t$ on the real robot. The confidence is recorded and passed to \sense{}, and it does not gate execution.
\plan{} can also test a candidate in twins instantiated from $\widetilde{\mathcal W}_t$. Agreement across these samples favors actions that tolerate the tested parameter variations, whereas disagreement points to an informative interaction before a sensitive action is committed.

At the next planning call, \plan{} receives measured and predicted motion, residuals, the outcomes of measurable expectations, resistance events and the latest observation.
A language interruption by the operator requests reconstruction before further planning.

\subsection{Recursive Self-Improvement of the Twin}
\label{sec:rsi}

\begin{figure}[t]
\centering
\includegraphics[width=\linewidth]{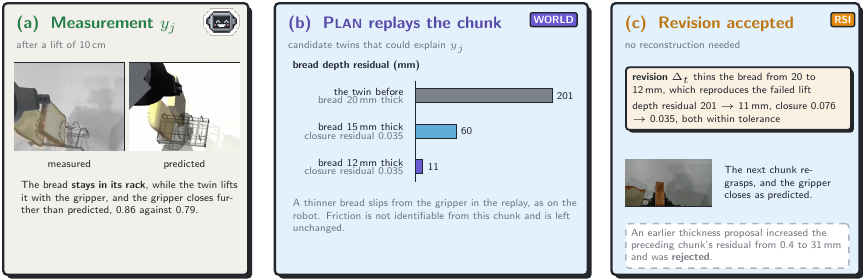}
\caption{\textbf{Replay-checked revision after a failed lift.} In a failed bread grasp, the real bread remains in its rack while the twin lifts it with the gripper. Reducing the modeled thickness reproduces the slip during replay and reduces both the depth and the closure residual. Friction, to which this chunk is insensitive, is left unchanged. The revision passes the non-regression check and is accepted, and the next chunk attempts another grasp without reconstruction. Here $j$ is the latest executed round and $t$ the revision round. An earlier thickness proposal had increased the residual on the preceding chunk and was rejected.}
\label{fig:rsi}
\end{figure}

Under \rsi{}, the twin's states and parameters evolve with the evidence, and its structure changes only through reconstruction.
Every proposed change names a supported field, a new value, a reason and its measurement evidence.
A measured discrepancy calls for one of three kinds of change, which answer different questions.
A state correction answers where the bodies are now. It synchronizes the twin, and for bodies moved by the chunk and for joints it is excluded from replay, because it does not explain the preceding transition.
A parameter revision answers why the transition differed from the prediction, and it must reproduce the measured response under replay.
A reconstruction answers that the representation itself is wrong, for example because a body is missing or an object no longer matches its modeled form.
\rsi{} performs the first two kinds of change, and the third returns the scene to \sense{}.

For parameter fitting, \plan{} uses rollout search or a least-squares tool that replays recorded controls through the simulator. The planner and the acceptance check share the same residual functions.
Accepted changes are marked \emph{fitted} and carry their evidence, so that every revised value records its provenance and \sense{} can keep, correct or undo fitted values in a later reconstruction.

Interaction supplies evidence that a static image cannot.
A wrist that stops short of its free-motion prediction indicates resistance, gripper closure constrains object size and compliance, and object motion separates mass and friction hypotheses.
When the evaluated signals are insensitive to a parameter, the replay check leaves it unconstrained, as with friction in Fig.~\ref{fig:rsi}. Resolving it requires an interaction whose outcome depends on it, which is the subject of Sec.~\ref{sec:theory}.

\subsection{The Harness}
\label{sec:harness}
The harness is the software between the two agents and the robot, and it follows one principle. Judgment belongs to the agents, while the harness supplies mechanism, measurement and hard checks.
It plays three roles that correspond to the components of agent harnesses in prior work~\citep{lee2026rhsi,wang2026regenharness}.

\paragraph{Context and tools.}
The harness defines what crosses module boundaries, namely the twin as an executable scene with its uncertainty ranges, revisions as named field assignments with evidence, chunks with their expectations $\mathcal P_t$, and a question channel from \plan{} to \sense{}.
It also maintains each agent's long-running context and sandboxed workspace (App.~\ref{app:harness}).

\paragraph{Verification.}
The harness selects revisions, gates chunks and requires \sense{}'s confirmation of completion, as stated in Proposition~\ref{prop:rsi} below. During execution, it compares measured end-effector positions and gripper closures with their predictions and halts motion at the monitor stop threshold (App.~\ref{app:constants}).

\paragraph{Execution records.}
The harness stores every round in full (App.~\ref{app:harness}), and these records serve online replay, offline refitting (Sec.~\ref{sec:library}) and the analyses of Sec.~\ref{sec:experiments}.

\begin{proposition}[Harness invariants]
\label{prop:rsi}
Under Algorithm~\ref{alg:spa}, every executed chunk has passed the gate $\mathrm G$ in the twin in which it executes, every revision applied by \rsi{} satisfies Eq.~\eqref{eq:accept} on the latest record since reconstruction, with moved-body and joint-state corrections excluded from replay, and success is returned only after \sense{} confirms completion on an observation taken after the last executed chunk.
\end{proposition}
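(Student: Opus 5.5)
The argument is a direct inspection of the control flow of Algorithm~\ref{alg:spa}, treating each of the three invariants as a statement about the lines that can reach, respectively, the call $\textsc{Robot}(\delta.u)$, the assignment of $\hat{\mathcal T}^{+}$ through \Call{RSI}{}, and the \textbf{return} of success. The plan is to fix an arbitrary round, enumerate the paths through the loop body, and show that on every path the relevant check has been evaluated on exactly the object the statement names. No property of the LLM $\pi$ is used. The invariants hold for any output of $\mu_\pi$, $\textsc{Sense}_\pi$ and $\textsc{Check}_\pi$, which is the point of placing the checks in the harness.

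\textbf{Gate invariant.} A chunk executes only when $\delta\neq\bot$ at the \act{} line, since otherwise the \textbf{continue} skips it. I would show by case analysis that $\delta\neq\bot$ there implies $\mathrm G(\hat{\mathcal T}^{+},\delta)=1$ for the pair $(\delta,\hat{\mathcal T}^{+})$ then held.

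In the first case, $\delta$ comes from the listed-chunk line. That line assigns a non-$\bot$ value only if the chunk passes continuation and $\mathrm G$ with $\hat{\mathcal T}^{+}=\hat{\mathcal T}$, and the \textbf{for} loop then does not run because of its \textbf{while} guard.

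In the second case, $\delta$ comes from iteration $a$ of the loop. There $\hat{\mathcal T}^{+}$ is assigned in the same line as $\delta$, and $\delta$ survives only if the subsequent test finds $\mathrm G(\hat{\mathcal T}^{+},\delta)\neq 0$. The loop then exits, so neither variable is overwritten before execution.

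It remains to show that $\hat{\mathcal T}^{+}$ is the twin in which the chunk executes. The \act{} line stores $\hat{\mathcal T}^{+}$ in the record and calls $\textsc{Advance}$ from it. This matches the convention that revisions take effect with the executed chunk.

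\textbf{Revision invariant.} The only assignment that applies revisions is \Call{RSI}{}, which returns one of three twins:
\begin{itemize}
\item $\hat{\mathcal T}\oplus\Delta$, guarded by $A(\Delta)$;
\item $\hat{\mathcal T}\oplus\Delta^{\mathrm{ind}}$, guarded by $A(\Delta^{\mathrm{ind}})$;
\item $\hat{\mathcal T}$, which applies nothing.
\end{itemize}
Since $A(S)$ is defined as Eq.~\eqref{eq:accept} evaluated on $\bar{\mathcal D}$, every applied set satisfies the acceptance condition. The exclusion of moved-body and joint-state corrections is built into the definition of $e_i^{\mathrm{rep}}$ in Sec.~\ref{sec:problem}, so it is inherited rather than proved.

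The main obstacle is the claim that $\bar{\mathcal D}$ is the latest record \emph{since reconstruction} at the moment $A$ is evaluated. I would argue this through the order of lines within a round. The reconstruction line executes before any planning call in the round. The record list $I$ is extended only on the \act{} line, after all \Call{RSI}{} calls of that round. Therefore within a round $\bar{\mathcal D}$ is constant, and it equals either the last executed record after the most recent reconstruction or, if there is none, the reconstruction observation. This matches the definition given under Revision acceptance. Two further points need care here:
\begin{itemize}
\item Revisions from a discarded repair attempt, whose $\hat{\mathcal T}^{+}$ failed the gate, are never applied, because $\hat{\mathcal T}$ itself is only updated via $\textsc{Advance}$ from a gated $\hat{\mathcal T}^{+}$.
\item Listed-chunk rounds apply no new revision, since they take $\hat{\mathcal T}^{+}=\hat{\mathcal T}$.
\end{itemize}

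\textbf{Completion invariant.} The sole \textbf{return} of success sits on the \sense{} line, guarded by $\textsc{Check}_\pi(\textsc{Observe}(),\ell)$. That line is reached in a round only after the \act{} line has executed $\delta.u$, so the fresh $\textsc{Observe}()$ call follows the last executed chunk. No other exit returns success: termination after two rounds without execution, resource limits and operator stops all produce other statuses. This completes the three invariants.
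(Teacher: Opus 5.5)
Your proposal is correct and follows essentially the same route as the paper's proof. Both argue by inspection of Algorithm~\ref{alg:spa}: a non-$\bot$ $\delta$ at the \act{} line forces $\mathrm G(\hat{\mathcal T}^{+},\delta)=1$ through either the continuation line or the planning loop, every branch of \textsc{RSI} that applies a change is guarded by $A$ on $\bar{\mathcal D}$, and the only success return follows $\textsc{Check}_\pi(\textsc{Observe}(),\ell)$ after the chunk has executed. Your extra remarks make explicit what the paper leaves implicit: that $\bar{\mathcal D}$ is constant within a round, that discarded repair attempts are not applied, and that no other exit returns success.
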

The proof, by inspection of Algorithm~\ref{alg:spa}, is given in App.~\ref{app:dual}.
These invariants tie what reaches the robot to what the twin predicts and what the measurements support. Transfer of the predicted outcome to the physical scene additionally depends on the fidelity of the twin, and App.~\ref{app:transfer} gives a sufficient condition.

\subsection{Informative Actions and Turn-Off}
\label{sec:theory}
The two results below make operational the two values that a chunk has under Eq.~\eqref{eq:dual}, progress and information, with proofs in App.~\ref{app:dual}.

\paragraph{Which actions are informative.}
The following lemma states when the predictions of two sampled twins guarantee that executing a chunk shrinks the belief.
\begin{lemma}[Discriminating chunks]
\label{lem:disc}
Let $\theta,\theta'\in\mathcal W_t$, let the chunk $u_t$ execute in full, and let $i$ be an entry that is measured in its record $\mathcal D_t$, whose discrepancy $d_i$ is a pseudometric and whose tolerance $\varepsilon_{i,t}$ does not depend on the measurement. If $d_i\big(\hat y_i(\theta;\mathcal D_t),\hat y_i(\theta';\mathcal D_t)\big)>2\varepsilon_{i,t}$, then for every measured value at least one of $\theta,\theta'$ is not in $\mathcal W_{t+1}$.
\end{lemma}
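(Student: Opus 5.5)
The argument is a contrapositive triangle-inequality step applied to the set-valued update of Eq.~\eqref{eq:belief}. I would suppose that some measured value $y_{i,t}$ leaves both $\theta$ and $\theta'$ in $\mathcal W_{t+1}$, and derive a contradiction with the separation hypothesis.

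First I check which constraints of Eq.~\eqref{eq:belief} bind. Since $u_t$ executes in full, its record $\mathcal D_t$ is the latest element of $\mathcal R_{t+1}$. Since $i$ is measured in that record, $i\in\mathcal Y_t$. So membership of any $\theta''$ in $\mathcal W_{t+1}$ requires
\[
d_i\bigl(y_{i,t},\hat y_i(\theta'';\mathcal D_t)\bigr)\le\varepsilon_{i,t}.
\]
I need only this necessary condition, so the other records, the other entries and the candidate domain $\bar\Theta_{t+1}$ play no role; in particular, nothing depends on whether the domain changed. Because $\varepsilon_{i,t}$ does not depend on the measurement, the same threshold applies whatever value $y_{i,t}$ turns out to take. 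That is what allows the conclusion to hold for every measured value.

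Next, assuming both $\theta,\theta'\in\mathcal W_{t+1}$, both inequalities above hold. Symmetry and the triangle inequality of the pseudometric $d_i$ give
\[
d_i\bigl(\hat y_i(\theta;\mathcal D_t),\hat y_i(\theta';\mathcal D_t)\bigr)\le d_i\bigl(\hat y_i(\theta;\mathcal D_t),y_{i,t}\bigr)+d_i\bigl(y_{i,t},\hat y_i(\theta';\mathcal D_t)\bigr)\le 2\varepsilon_{i,t}.
\]
This contradicts the hypothesis that the predicted separation exceeds $2\varepsilon_{i,t}$. Note that $\theta,\theta'\in\mathcal W_t$ is not needed for the contradiction; it only makes the statement about shrinking the current belief meaningful.

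The main obstacle is bookkeeping rather than analysis. I must make sure the prediction $\hat y_i(\cdot;\mathcal D_t)$ in the hypothesis is the same replay that appears in Eq.~\eqref{eq:belief}: same starting state, same controls, same structure since reconstruction. I also need the measurement and the prediction to live in one space on which $d_i$ is defined, so the triangle inequality applies. Finally, the measurement-independence of the tolerance is essential. If $\varepsilon_{i,t}$ depended on the measurement, the bound $2\varepsilon_{i,t}$ would not be fixed in advance, and the "for every measured value" claim would fail. I would state these points explicitly before the two-line computation.
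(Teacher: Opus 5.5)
Your proof is correct and follows the paper's own argument: you assume both $\theta$ and $\theta'$ remain in $\mathcal W_{t+1}$, take the entry-$i$ constraint of Eq.~\eqref{eq:belief} on the record $\mathcal D_t$, and use symmetry and the triangle inequality to bound the predicted separation by $2\varepsilon_{i,t}$, which contradicts the hypothesis. Your bookkeeping remarks are accurate additions within the same proof, namely that only the latest record's constraint is needed and that membership in $\mathcal W_t$ is not used.
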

The operational tolerance that bounds revision acceptance thus also defines what counts as an informative interaction, and \plan{}'s comparison of sampled twins tests the lemma's condition, following the prospective reasoning of dual control~\citep{barshalom1976caution}.
A test grasp separates thickness hypotheses through closure, and moving a wrist camera closer reduces the depth tolerance and sharpens the comparison of depth predictions across sampled twins, although the depth entry lies outside the lemma's hypotheses (App.~\ref{app:dual}). Many such chunks also advance the task.

\paragraph{When probing pays, and why local revision counteracts turn-off.}
Turn-off is the opposite of probing. A planner that keeps choosing views or small steps that avoid the uncertain contact never executes a chunk that meets the condition of Lemma~\ref{lem:disc} for the parameter that made the contact uncertain. No measurement is then guaranteed to reduce that uncertainty, and caution can reinforce itself~\citep{mookerjee1985turnoff}.
Whether probing is worth its price depends on how the probe's outcome enters the twin.
Consider a sensitive chunk that succeeds with probability $q$ under $\rho_t$ and otherwise incurs a failure cost $C_{\mathrm{fail}}$, and a discriminating probe that can precede it.
The probe takes planning and execution times $T'_P$ and $T'_A$ and costs $C_{\mathrm{probe}}=T'_P+T'_A+C_{\mathrm{inc}}$, where $C_{\mathrm{inc}}$ is the cost of incorporating its outcome into the twin, which is one replay-checked revision under \rsi{} or a sensing time $T_S$ under reconstruction. The sensitive chunk takes $T_P$ and $T_A$, and $C_{\mathrm{fail}}>0$ is the expected additional time after its failure, including recovery.
\begin{proposition}[Probing threshold]
\label{prop:probe}
Suppose that the probe itself does not fail and that, after it, the sensitive chunk succeeds. Then probing first has strictly lower expected time than committing the sensitive chunk directly if and only if $q<q^\star=1-C_{\mathrm{probe}}/C_{\mathrm{fail}}$. The threshold $q^\star$ is strictly decreasing in $C_{\mathrm{inc}}$, so a cheaper incorporation raises $q^\star$ and widens the interval $q<q^\star$ in which probing is preferred.
\end{proposition}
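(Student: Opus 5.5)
The plan is to write out the expected times of the two policies explicitly and compare them. Under the stated assumptions the only random event is the success of the sensitive chunk when it is committed directly, so both expectations are affine in $q$.

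\textbf{Direct commitment.} The planner pays $T_P+T_A$ for the sensitive chunk. With probability $1-q$ it then pays the additional time $C_{\mathrm{fail}}$, which by definition already includes recovery. Hence
\begin{equation*}
\mathbb E[T_{\mathrm{direct}}]=T_P+T_A+(1-q)\,C_{\mathrm{fail}} .
\end{equation*}

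\textbf{Probing first.} The planner pays $C_{\mathrm{probe}}=T'_P+T'_A+C_{\mathrm{inc}}$. By hypothesis the probe does not fail and the subsequent sensitive chunk succeeds. The remaining cost is therefore exactly $T_P+T_A$, and
\begin{equation*}
\mathbb E[T_{\mathrm{probe}}]=C_{\mathrm{probe}}+T_P+T_A .
\end{equation*}
Here I take the planning and execution times of the sensitive chunk to be the same with and without the probe. This is the reading implicit in the statement, and I will make it explicit.

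\textbf{Comparison.} Subtracting, the common term $T_P+T_A$ cancels:
\begin{equation*}
\mathbb E[T_{\mathrm{probe}}]<\mathbb E[T_{\mathrm{direct}}]\iff C_{\mathrm{probe}}<(1-q)\,C_{\mathrm{fail}} .
\end{equation*}
Since $C_{\mathrm{fail}}>0$, dividing by it is an equivalence, which gives $q<1-C_{\mathrm{probe}}/C_{\mathrm{fail}}=q^\star$. This is the stated "if and only if".

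\textbf{Monotonicity.} The quantity $q^\star=1-(T'_P+T'_A+C_{\mathrm{inc}})/C_{\mathrm{fail}}$ is affine in $C_{\mathrm{inc}}$ with slope $-1/C_{\mathrm{fail}}<0$, so it is strictly decreasing. Consequently, replacing $C_{\mathrm{inc}}=T_S$ (reconstruction) by the smaller cost of one replay-checked revision raises $q^\star$. The set $\{q: q<q^\star\}$ is then strictly enlarged.

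\textbf{Main obstacle.} There is essentially none beyond bookkeeping. The one point to get right is the accounting: $C_{\mathrm{fail}}$ must be added only on the failure branch, and no cost term may be counted twice, consistent with the critical-path decomposition of Eq.~\eqref{eq:cost}. I also note that if $q^\star\le 0$ the interval is empty, and probing never pays. The statement remains true in that case.
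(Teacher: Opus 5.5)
Your proof is correct and follows the paper's argument essentially line for line. Like the paper, you write the direct and probing expected times, cancel the common $T_P+T_A$, divide by $C_{\mathrm{fail}}>0$ to get $q<q^\star$, and note that $q^\star$ is affine in $C_{\mathrm{inc}}$ with slope $-1/C_{\mathrm{fail}}$. Stating outright that the sensitive chunk's times are the same with and without the probe is a useful clarification of an assumption the paper uses only implicitly.
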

In the towel episode of Fig.~\ref{fig:real}b, a reconstruction took about 20\,min, whereas a planning call took 3 to 6\,min and a chunk seconds, so incorporating a probe's outcome by a replay-checked revision instead of a reconstruction removes the dominant term of $C_{\mathrm{probe}}$.
Reconstruction remains preferable when one new view resolves several uncertainties at once, which amounts to a larger information gain per unit of $C_{\mathrm{inc}}$, and this tradeoff is left to the planner.

\subsection{Decoupled Modules and Scaling}
\label{sec:talk}
\sense{} and \plan{} share observations and history but exchange only explicit representations, namely the twin with its uncertainty ranges, its revisions, expectations, measured evidence, and notes and questions in language.
A learned reconstruction model can therefore supply the twin and its uncertainty ranges, and a learned policy can propose trajectories that are evaluated in its simulation, while the harness continues to check revisions, gate chunks and require confirmation of completion.

The same structure determines how the loop scales.
Perception and planning inference dominate the critical path of Eq.~\eqref{eq:cost}, while robot motion occupies a small share of it (Fig.~\ref{fig:real}b).
Faster models therefore shorten each round directly and make shorter chunks affordable, more capable models write more accurate initial twins that need fewer revisions, and overlapping perception with planning can remove part of the remaining serial time (App.~\ref{sec:spec}).

\section{Experiments}
\label{sec:experiments}

\subsection{Setup}
\label{sec:setup}

\paragraph{Robot and implementation.}
The robot is a dual-arm manipulator with an RGB head camera and two RGB-D wrist cameras.
Both modules are coding agents using a frontier LLM, each in a long-running sandboxed conversation, connected through the harness (Sec.~\ref{sec:harness}).
The robot model, controller, reference camera intrinsics and mounting transforms, and metric depth are supplied platform inputs.
The agents receive no object models and undergo no task-specific training or preparatory trials.

\paragraph{Tasks.}
\emph{Towel handling} asks the robot to pick up a towel and put it aside, with the initial towel represented as a deformable body.
\emph{Toaster manipulation} asks it to put bread into a toaster slot and press the lever.

\paragraph{Measurements.}
Ground-truth geometry and physical parameters are unavailable on the real robot, so the twin is evaluated by its agreement with measurements, namely rendered-depth residuals (App.~\ref{app:error}), gripper closure and end-effector motion.
The object-level entries in Table~\ref{tab:rev} take the maximum of the recorded residuals over the available depth cameras.

\subsection{Observed Twin Revisions}
\label{sec:results}

\begin{figure}[t]
\centering
\begin{tabular}{@{}l@{\hspace{3pt}}c@{}}
\raisebox{-\height}{\small\textbf{(a)}} & \raisebox{-\height}{\includegraphics[width=0.965\linewidth]{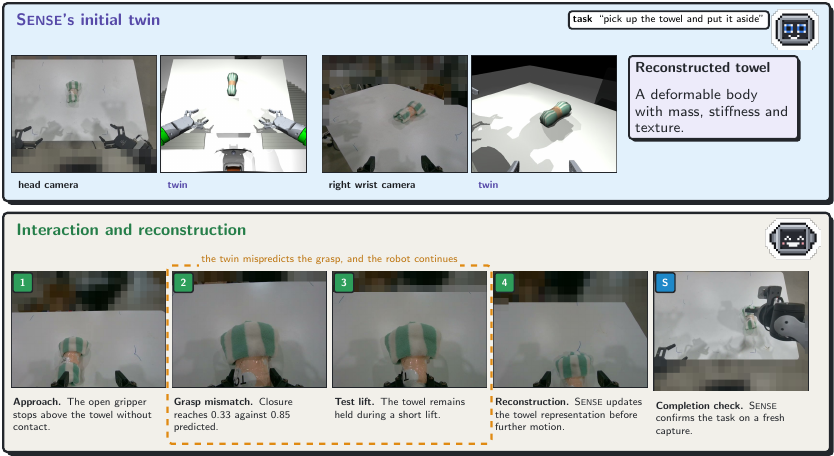}}\\[4pt]
\raisebox{-\height}{\small\textbf{(b)}} & \raisebox{-\height}{\includegraphics[width=0.965\linewidth]{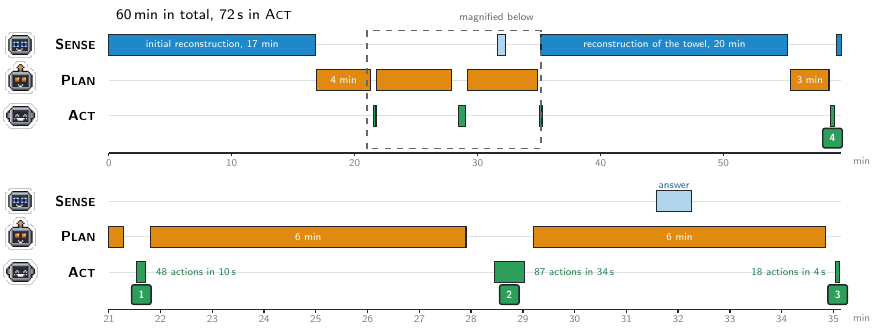}}
\end{tabular}
\caption{\textbf{Towel handling.} (a) The top row shows camera images and the initial twin rendered from the same cameras, and the bottom row selected interaction and reconstruction stages, numbered by chunk. Dashed outlines mark the grasp mismatch between the twin and the robot, after which reconstruction changes the towel representation to match its observed configuration. (b) Where the time goes in the same episode. \sense{} is blue, lighter while it answers a question, \plan{} is orange, and the robot's chunks are green with the numbers of (a). The bottom lanes magnify the three chunks between the two reconstructions. Model inference fills almost the entire hour, whereas the robot moves for seconds per chunk.}
\label{fig:real}
\end{figure}

\begin{table}[t]
\centering\footnotesize
\setlength{\tabcolsep}{4pt}
\begin{tabularx}{\linewidth}{@{}l>{\raggedright\arraybackslash}Xlrr@{}}
\toprule
Case & Revision & Residual & Before & After \\
\midrule
Slot support & support inside the slot raised by 2\,cm & bread depth & 22.4\,mm & 1.5\,mm \\
Towel geometry & table aligned with the measured plane and towel thickness scaled by 1.45 & towel depth & 60.1\,mm & 4.3\,mm \\
Bread grasp & bread thickness reduced from 20 to 12\,mm & gripper closure & 0.076 & 0.035 \\
 & & bread depth & 201\,mm & 11\,mm \\
\bottomrule
\end{tabularx}
\vspace{4pt}
\caption{\textbf{Replay residuals before and after parameter revision.} Depth entries are absolute signed-median residuals, maximized over the available cameras. Closure is dimensionless on a scale from 0 (open) to 1 (closed).}
\label{tab:rev}
\end{table}

\paragraph{State, parameters and structure.}
Parameter revisions bring the twin into agreement with what the robot measured (Table~\ref{tab:rev}).
Thinning the bread from 20 to 12\,mm reproduces the failed lift and reduces both the depth and the closure residual (Fig.~\ref{fig:rsi}), and raising the support inside the toaster slot by 2\,cm reduces the bread's depth residual from 22.4 to 1.5\,mm.
In each case of Table~\ref{tab:rev}, the next chunk executed without reconstruction.
When no parameter explains the evidence, as when the grasped towel no longer matches its initial representation, \plan{} requests reconstruction and \sense{} rewrites the towel (Fig.~\ref{fig:real}a).

\subsection{Sources of Stalled Execution}
\label{sec:turnoff}
The episode records separate three sources of slow progress.
Turn-off appears as repeated views or small steps that avoid the contact needed to resolve uncertainty.
Reconstruction adds latency, which is wasted when a local revision would have sufficed. In the towel episode, the two reconstructions together took longer than all planning calls of the episode (Fig.~\ref{fig:real}b), which is the cost that local revision avoids whenever it suffices.
A gate refusal, in contrast, is a failed model-based check rather than reluctance to act.
The stated confidence $c_t$ does not separate these cases, because it estimates whether one chunk succeeds rather than what the chunk reveals, and missed expectations also occur at high confidence.
The informative quantity is the measured discrepancy against a stated prediction, together with whether the next action can resolve it, and this is precisely the quantity on which \rsi{} operates.

\section{From Episodes to High-Fidelity Worlds}
\label{sec:library}

\begin{figure}[t]
\centering
\includegraphics[width=\linewidth]{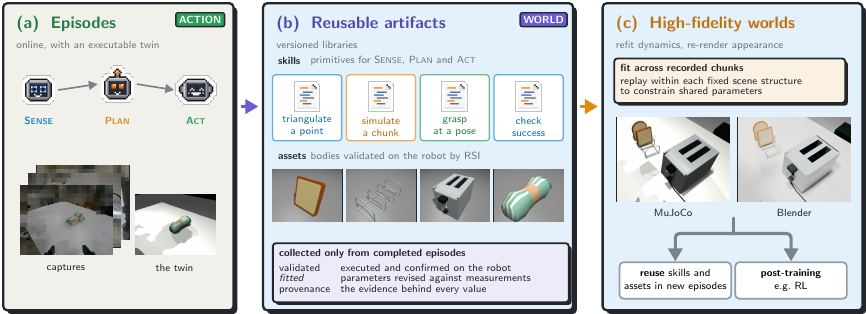}
\caption{\textbf{From episodes to high-fidelity worlds.} (a) Each episode records observations, twins, controls and revisions. (b) Versioned libraries collect agent-written skills, such as perception and motion primitives, and scene bodies from completed episodes, whose parameters RSI has validated against the robot's measurements. (c) Offline, recorded chunks are replayed jointly to refit parameters, and the refitted twin is rendered again with the same geometry, here in MuJoCo and in Blender. Skills and assets can be reused in later episodes, and refitted twins can serve post-training.}
\label{fig:library}
\end{figure}

An explicit world outlives the episode in which it was written.
Its records hold every round of interaction (Fig.~\ref{fig:library}), and the skills and scene bodies of completed episodes are collected into versioned libraries.
Because the bodies of such a twin have been tested against the robot's measurements and revised by \rsi{} where their predictions disagreed, the library contains assets whose geometry and physical parameters are consistent with real measurements, together with the provenance of each value.

Online, \rsi{} fits each revision to the latest chunk, which suffices for acting.
Offline, the same records support a second optimization. Within each interval of fixed scene structure, all recorded chunks can be replayed jointly to constrain shared parameters, and the refitted twin can be rendered at higher fidelity with unchanged geometry.
This extends the real-to-sim-to-real cycle to the level of the dataset.
The resulting twins are reconstructions of real scenes grounded in real robot measurements, which makes them natural environments for post-training, for example reinforcement learning of policies that are subsequently deployed on the robot.

\section{Limitations}
\label{sec:limitations}
Inference latency currently restricts \dworld{} to quasi-static tasks. The twin is limited to the mechanisms its simulator represents, and neither module adds missing physics as residual-engine approaches do~\citep{liang2026empiric}.
Replay acceptance is local. It enforces non-regression on the latest chunk within operational tolerances, without guaranteeing improvement on earlier chunks or identifying unique physical parameters, and the implemented consistency intervals approximate the belief set coordinate-wise.
The gate and monitor check one nominal twin and the planner's expectations, without bounding contact forces.
\plan{} approximates the dual-control objective by judgment over sampled twins rather than by optimization. Further experiments, including systematic comparisons with reconstruction-only and learned baselines, remain to be run.

\section{Conclusion}
\label{sec:conclusion}
\dworld{} treats the world model of a physical agent as an explicit, executable artifact that the agent writes, tests against the robot's measurements and improves while it acts.
This choice changes what a prediction error means.
In an implicit model, an error is a symptom that further training may or may not remove. In an explicit twin, it points to named fields, and a single replay decides whether a proposed explanation is consistent with what the robot measured.
Recursive self-improvement builds on this property. State corrections keep the twin synchronized with the scene and replay-checked parameter revisions make its predictions agree with the latest response, while reconstruction changes its structure when no parameter can explain the evidence.

The same structure makes the loop a dual controller.
Every executed chunk both advances the task and constrains the twin. Because one local revision absorbs the outcome of an informative interaction, probing costs far less than reconstruction, and the planner has a reason to resolve uncertainty rather than avoid it.
The harness gives these judgments the guarantees of Proposition~\ref{prop:rsi}. It defines what the agents exchange, admits only revisions and chunks that pass its checks, and records every round in a form that can be replayed.

Three properties suggest how the approach scales.
Perception and planning meet only at the twin and in language, so learned reconstruction models and policies can replace the agents without changing the checks.
Inference dominates the loop, so faster and stronger models translate directly into shorter rounds, shorter chunks and more accurate initial twins, which brings dynamic interaction within reach.
Finally, every episode leaves behind reconstructed scenes grounded in real measurements, which offline refitting can raise to higher fidelity. Accumulated across tasks and robots, these twins form a growing set of real-to-sim environments in which policies can be post-trained, for example with reinforcement learning, and returned to the robot, which completes the real-to-sim-to-real cycle beyond the episode.

\clearpage

\begingroup
\setlength{\bibsep}{4pt plus 1pt minus 1pt}
\bibliography{refs}

\begin{thebibliography}{75}
\providecommand{\natexlab}[1]{#1}
\providecommand{\url}[1]{\texttt{#1}}
\expandafter\ifx\csname urlstyle\endcsname\relax
  \providecommand{\doi}[1]{doi: #1}\else
  \providecommand{\doi}{doi: \begingroup \urlstyle{rm}\Url}\fi

\bibitem[Liang et~al.(2023)Liang, Huang, Xia, Xu, Hausman, Ichter, Florence,
  and Zeng]{liang2023cap}
J.~Liang, W.~Huang, F.~Xia, P.~Xu, K.~Hausman, B.~Ichter, P.~Florence, and
  A.~Zeng.
\newblock {Code as Policies: Language Model Programs for Embodied Control}.
\newblock In \emph{IEEE International Conference on Robotics and Automation
  (ICRA)}, 2023.

\bibitem[Fu et~al.(2026)Fu, Yu, El-Refai, Kou, Xue, Huang, Xiao, Wang, Niu, Li,
  Shi, Wu, Sastry, Zhu, Goldberg, and Fan]{fu2026capx}
L.~Fu, J.~Yu, K.~El-Refai, E.~Kou, H.~Xue, H.~Huang, W.~Xiao, G.~Wang, D.~Niu,
  F.-F. Li, G.~Shi, J.~Wu, S.~Sastry, Y.~Zhu, K.~Goldberg, and L.~J. Fan.
\newblock {CaP-X: A Framework for Benchmarking and Improving Coding Agents for
  Robot Manipulation}.
\newblock In \emph{International Conference on Machine Learning (ICML)}, 2026.

\bibitem[Zhang et~al.(2026)Zhang, Wang, Ouyang, Huang, Li, Su, Jin, Chai,
  Liang, Dou, Chen, and Chen]{zhang2026astrapolicy}
W.~Zhang, K.~Wang, Y.~Ouyang, X.~Huang, L.~Li, K.~Su, W.~Jin, W.~Chai,
  H.~Liang, Z.~Dou, Y.~Chen, and T.~Chen.
\newblock {An Unexpected Robot Policy: Early Evaluations of GPT-6 Astra on
  RoboDojo and Beyond}.
\newblock \emph{arXiv preprint arXiv:2609.24170}, 2026.

\bibitem[{Galbot Team}(2026)]{galbot2026astra}
{Galbot Team}.
\newblock {Systematically Exploring the Capabilities of GPT-6 Astra as Embodied
  Policies}.
\newblock \emph{arXiv preprint arXiv:2609.38537}, 2026.

\bibitem[Chen et~al.(2026)Chen, Chen, Li, Tang, Su, Lu, Wan, Chen, Liu, Yan,
  Su, Dou, Wang, Zhang, Liu, Qin, Liang, Wu, Lin, Lin, Wang, He, Wu, Wu, Zhou,
  Lei, Yu, Ji, Jin, Lin, Li, Xiong, Xu, Li, Chai, Xie, Wang, Mu, Dong, Matusik,
  Ding, Ding, Luo, and Tomizuka]{chen2026robodojo}
T.~Chen, Y.~Chen, Z.~Li, J.~Tang, K.~Su, H.~Lu, W.~Wan, B.~Chen, S.~Liu,
  H.~Yan, H.~Su, Z.~Dou, K.~Wang, D.~Zhang, Y.~Liu, Y.~Qin, Q.~Liang, Q.~Wu,
  Z.~Lin, W.~Lin, Y.~Wang, M.~He, T.~Wu, R.~Wu, J.~Zhou, K.-C. Lei, H.~Yu,
  Y.~Ji, W.~Jin, G.~Lin, X.~Li, Q.~Xiong, R.~Xu, Z.~Li, W.~Chai, E.~Xie,
  Z.~Wang, Y.~Mu, H.~Dong, W.~Matusik, M.~Ding, W.~Ding, P.~Luo, and
  M.~Tomizuka.
\newblock {RoboDojo: A Unified Sim-and-Real Benchmark for Comprehensive
  Evaluation of Generalist Robot Manipulation Policies}.
\newblock \emph{arXiv preprint arXiv:2607.04434}, 2026.

\bibitem[Ye et~al.(2026)Ye, Ge, Zheng, Gao, Yu, Kurian, Indupuru, Tan, Zhu,
  Xiang, Malik, Lee, Liang, Ranawaka, Gu, Xu, Wang, Hu, Narayan, Bjorck, Wang,
  Kim, Niu, Zheng, Xie, Wu, Wang, Julian, Xu, Du, Chebotar, Reed, Kautz, Zhu,
  Fan, and Jang]{ye2026dreamzero}
S.~Ye, Y.~Ge, K.~Zheng, S.~Gao, S.~Yu, G.~Kurian, S.~Indupuru, Y.~L. Tan,
  C.~Zhu, J.~Xiang, A.~Malik, K.~Lee, W.~Liang, N.~Ranawaka, J.~Gu, Y.~Xu,
  G.~Wang, F.~Hu, A.~Narayan, J.~Bjorck, J.~Wang, G.~Kim, D.~Niu, R.~Zheng,
  Y.~Xie, J.~Wu, Q.~Wang, R.~Julian, D.~Xu, Y.~Du, Y.~Chebotar, S.~Reed,
  J.~Kautz, Y.~Zhu, L.~Fan, and J.~Jang.
\newblock {World Action Models are Zero-shot Policies}.
\newblock \emph{arXiv preprint arXiv:2602.15922}, 2026.

\bibitem[Lu et~al.(2026)Lu, Zhai, Wang, Zeng, Yang, Liu, Cheng, Zhang, Qiu,
  Cheng, Laptev, Xu, Riviere, Loianno, Xing, and Zuo]{lu2026wamsurvey}
Z.~Lu, H.~Zhai, G.~Wang, H.~Zeng, J.~Yang, J.~Liu, L.~Cheng, Y.~Zhang, Y.~Qiu,
  Z.~Cheng, I.~Laptev, D.~Xu, B.~Riviere, G.~Loianno, E.~Xing, and X.~Zuo.
\newblock {World-Action Models for Robot Learning and Control: A Survey}.
\newblock \emph{arXiv preprint arXiv:2609.16074}, 2026.

\bibitem[Yang et~al.(2026)Yang, Liu, Kou, Chen, Hu, Zhou, Zhao, Wei, Xia, Li,
  Liu, and Deng]{yang2026wla}
Y.~Yang, Z.~Liu, S.~Kou, Y.~Chen, Y.~Hu, J.~Zhou, B.~Zhao, Z.~Wei, X.~Xia,
  X.~Li, P.~Liu, and Z.~Deng.
\newblock {World-Language-Action Model for Unified World Modeling, Language
  Reasoning, and Action Synthesis}.
\newblock \emph{arXiv preprint arXiv:2606.05979}, 2026.

\bibitem[Wang et~al.(2026)Wang, Cai, Chen, Chi, Sun, Dai, Hung, Guo, Ren, Yao,
  Liu, Long, Duan, Gao, Lyu, Liu, and Wu]{wang2026codeasworld}
H.~Wang, Y.~Cai, W.~Chen, J.~Chi, H.~Sun, Q.~Dai, Y.-H. Hung, X.~Guo, J.~Ren,
  R.~Yao, Z.~Liu, M.~Long, Y.~Duan, J.~Gao, J.~Lyu, F.~Liu, and J.~Wu.
\newblock {Code as Worlds: Agentic Discovery of Executable World
  Representations for Physical Reasoning}.
\newblock \emph{arXiv preprint arXiv:2608.27549}, 2026.

\bibitem[Tang et~al.(2026{\natexlab{a}})Tang, Zhang, Yang, Yan, Zhou, Mao, Han,
  Cao, Xu, He, Fu, Feng, and Huang]{tang2026video2world}
J.~Tang, Z.~Zhang, J.~Yang, Y.~Yan, K.~Zhou, L.~Mao, R.~Han, J.~Cao, W.~Xu,
  L.~He, M.~Fu, F.~Feng, and B.~Huang.
\newblock {Video2World: Benchmarking Coding Agents for Interactive World
  Modeling from Embodied Videos}.
\newblock \emph{arXiv preprint arXiv:2610.04432}, 2026{\natexlab{a}}.

\bibitem[Tang et~al.(2026{\natexlab{b}})Tang, Shimada, Meng, Bi, Liu, Wang,
  Xiao, Tan, Zhang, Huang, Liang, Shen, Song, Vosoughi, Feng, Filvantorkaman,
  and Xu]{tang2026bvb}
Y.~Y. Tang, D.~Shimada, J.~Meng, J.~Bi, P.~Liu, Y.~Wang, Y.~Xiao, Z.~Tan,
  Z.~Zhang, C.~Huang, S.~Liang, Q.~Shen, L.~Song, A.~Vosoughi, M.~Feng,
  M.~Filvantorkaman, and C.~Xu.
\newblock {BVB: Benchmarking Agentic Video Understanding via Programmatic
  Reconstruction in Blender}.
\newblock \emph{arXiv preprint arXiv:2609.15478}, 2026{\natexlab{b}}.

\bibitem[Nilsson(1984)]{nilsson1984shakey}
N.~J. Nilsson.
\newblock {Shakey the Robot}.
\newblock Technical Report Technical Note 323, SRI International, 1984.

\bibitem[Brooks(1986)]{brooks1986robust}
R.~A. Brooks.
\newblock {A Robust Layered Control System for a Mobile Robot}.
\newblock \emph{IEEE Journal on Robotics and Automation}, 2\penalty0
  (1):\penalty0 14--23, 1986.

\bibitem[Brooks(1991)]{brooks1991intelligence}
R.~A. Brooks.
\newblock {Intelligence without Representation}.
\newblock \emph{Artificial Intelligence}, 47\penalty0 (1--3):\penalty0
  139--159, 1991.

\bibitem[Torne et~al.(2024)Torne, Simeonov, Li, Chan, Chen, Gupta, and
  Agrawal]{torne2024rialto}
M.~Torne, A.~Simeonov, Z.~Li, A.~Chan, T.~Chen, A.~Gupta, and P.~Agrawal.
\newblock {Reconciling Reality through Simulation: A Real-to-Sim-to-Real
  Approach for Robust Manipulation}.
\newblock In \emph{Robotics: Science and Systems (RSS)}, 2024.

\bibitem[Feldbaum(1960)]{feldbaum1960dual}
A.~A. Feldbaum.
\newblock {Dual Control Theory. I}.
\newblock \emph{Avtomatika i Telemekhanika}, 21\penalty0 (9):\penalty0
  1240--1249, 1960.

\bibitem[Bar-Shalom and Tse(1974)]{barshalom1974dual}
Y.~Bar-Shalom and E.~Tse.
\newblock {Dual Effect, Certainty Equivalence, and Separation in Stochastic
  Control}.
\newblock \emph{IEEE Transactions on Automatic Control}, 19\penalty0
  (5):\penalty0 494--500, 1974.

\bibitem[{\AA}str{\"o}m and Wittenmark(1971)]{astrom1971problems}
K.~J. {\AA}str{\"o}m and B.~Wittenmark.
\newblock {Problems of Identification and Control}.
\newblock \emph{Journal of Mathematical Analysis and Applications}, 34\penalty0
  (1):\penalty0 90--113, 1971.
\newblock \doi{10.1016/0022-247X(71)90161-2}.

\bibitem[Wittenmark(1995)]{wittenmark1995dual}
B.~Wittenmark.
\newblock {Adaptive Dual Control Methods: An Overview}.
\newblock \emph{IFAC Proceedings Volumes}, 28:\penalty0 67--72, 1995.

\bibitem[{Physical Intelligence} et~al.(2025){Physical Intelligence}, Black,
  Brown, Darpinian, Dhabalia, Driess, Esmail, Equi, Finn, Fusai, Galliker,
  Ghosh, Groom, Hausman, Ichter, Jakubczak, Jones, Ke, LeBlanc, Levine,
  Li-Bell, Mothukuri, Nair, Pertsch, Ren, Shi, Smith, Springenberg, Stachowicz,
  Tanner, Vuong, Walke, Walling, Wang, Yu, and Zhilinsky]{pi2025pi05}
{Physical Intelligence}, K.~Black, N.~Brown, J.~Darpinian, K.~Dhabalia,
  D.~Driess, A.~Esmail, M.~Equi, C.~Finn, N.~Fusai, M.~Y. Galliker, D.~Ghosh,
  L.~Groom, K.~Hausman, B.~Ichter, S.~Jakubczak, T.~Jones, L.~Ke, D.~LeBlanc,
  S.~Levine, A.~Li-Bell, M.~Mothukuri, S.~Nair, K.~Pertsch, A.~Z. Ren, L.~X.
  Shi, L.~Smith, J.~T. Springenberg, K.~Stachowicz, J.~Tanner, Q.~Vuong,
  H.~Walke, A.~Walling, H.~Wang, L.~Yu, and U.~Zhilinsky.
\newblock {$\pi_{0.5}$: a Vision-Language-Action Model with Open-World
  Generalization}.
\newblock In \emph{Conference on Robot Learning (CoRL)}, 2025.

\bibitem[{NVIDIA} et~al.(2025){NVIDIA}, Bjorck, Castañeda, Cherniadev, Da,
  Ding, Fan, Fang, Fox, Hu, Huang, Jang, Jiang, Kautz, Kundalia, Lao, Li, Lin,
  Lin, Liu, Llontop, Magne, Mandlekar, Narayan, Nasiriany, Reed, Tan, Wang,
  Wang, Wang, Wang, Xiang, Xie, Xu, Xu, Ye, Yu, Zhang, Zhang, Zhao, Zheng, and
  Zhu]{nvidia2025groot}
{NVIDIA}, J.~Bjorck, F.~Castañeda, N.~Cherniadev, X.~Da, R.~Ding, L.~J. Fan,
  Y.~Fang, D.~Fox, F.~Hu, S.~Huang, J.~Jang, Z.~Jiang, J.~Kautz, K.~Kundalia,
  L.~Lao, Z.~Li, Z.~Lin, K.~Lin, G.~Liu, E.~Llontop, L.~Magne, A.~Mandlekar,
  A.~Narayan, S.~Nasiriany, S.~Reed, Y.~L. Tan, G.~Wang, Z.~Wang, J.~Wang,
  Q.~Wang, J.~Xiang, Y.~Xie, Y.~Xu, Z.~Xu, S.~Ye, Z.~Yu, A.~Zhang, H.~Zhang,
  Y.~Zhao, R.~Zheng, and Y.~Zhu.
\newblock {GR00T N1: An Open Foundation Model for Generalist Humanoid Robots}.
\newblock \emph{arXiv preprint arXiv:2503.14734}, 2025.

\bibitem[Yuan et~al.(2026)Yuan, Dong, Liu, and Zhao]{yuan2026fastwam}
T.~Yuan, Z.~Dong, Y.~Liu, and H.~Zhao.
\newblock {Fast-WAM: Do World Action Models Need Test-time Future Imagination?}
\newblock \emph{arXiv preprint arXiv:2603.16666}, 2026.

\bibitem[Tang et~al.(2024)Tang, Key, and Ellis]{tang2024worldcoder}
H.~Tang, D.~Key, and K.~Ellis.
\newblock {WorldCoder, a Model-Based LLM Agent: Building World Models by
  Writing Code and Interacting with the Environment}.
\newblock In \emph{Advances in Neural Information Processing Systems
  (NeurIPS)}, 2024.

\bibitem[Dainese et~al.(2024)Dainese, Merler, Alakuijala, and
  Marttinen]{dainese2024gifmcts}
N.~Dainese, M.~Merler, M.~Alakuijala, and P.~Marttinen.
\newblock {Generating Code World Models with Large Language Models Guided by
  Monte Carlo Tree Search}.
\newblock In \emph{Advances in Neural Information Processing Systems
  (NeurIPS)}, 2024.

\bibitem[Li et~al.(2026{\natexlab{a}})Li, Chen, and
  Alonso-Mora]{li2026explicitwm}
X.~Li, G.~Chen, and J.~Alonso-Mora.
\newblock {Building Explicit World Model for Zero-Shot Open-World Object
  Manipulation}.
\newblock \emph{arXiv preprint arXiv:2603.13825}, 2026{\natexlab{a}}.

\bibitem[Li et~al.(2026{\natexlab{b}})Li, Liao, and Zhu]{li2026rcwm}
Z.~Li, Y.~Liao, and B.~Zhu.
\newblock {Recursive Code World Models: Building Complex Worlds through
  Recursive Scene Programs}.
\newblock \emph{arXiv preprint arXiv:2609.11499}, 2026{\natexlab{b}}.

\bibitem[Chen et~al.(2026)Chen, Lin, and Zhang]{chen2026codeworldmodel}
Y.~Chen, G.~Lin, and C.~Zhang.
\newblock {Code World Model: Coding Agent as World Brain}.
\newblock \emph{arXiv preprint arXiv:2608.25927}, 2026.

\bibitem[Bai and Xiong(2026)]{bai2026visualpatchworld}
J.~Bai and J.~Xiong.
\newblock {VisualPatchWorld: Code World Models as Latent Structured
  Representations for Planning}.
\newblock \emph{arXiv preprint arXiv:2607.25236}, 2026.

\bibitem[Ning et~al.(2025)Ning, Fang, and Ma]{ning2025pwtf}
C.~Ning, K.~Fang, and W.-C. Ma.
\newblock {Prompting with the Future: Open-World Model Predictive Control with
  Interactive Digital Twins}.
\newblock In \emph{Robotics: Science and Systems (RSS)}, 2025.

\bibitem[Liu et~al.(2026)Liu, Yao, Chen, Gao, Mao, Huang, and
  Du]{liu2026simpact}
H.~Liu, S.~Yao, H.~Chen, J.~Gao, J.~Mao, J.-B. Huang, and Y.~Du.
\newblock {SIMPACT: Simulation-Enabled Action Planning using Vision-Language
  Models}.
\newblock In \emph{IEEE/CVF Conference on Computer Vision and Pattern
  Recognition (CVPR)}, 2026.

\bibitem[Abou-Chakra et~al.(2026)Abou-Chakra, Sun, Rana, May, Schmeckpeper,
  Suenderhauf, Minniti, and Herlant]{abouchakra2025realissim}
J.~Abou-Chakra, L.~Sun, K.~Rana, B.~May, K.~Schmeckpeper, N.~Suenderhauf, M.~V.
  Minniti, and L.~Herlant.
\newblock {Real-is-Sim: Bridging the Sim-to-Real Gap with a Dynamic Digital
  Twin}.
\newblock In \emph{IEEE International Conference on Robotics and Automation
  (ICRA)}, 2026.

\bibitem[Wang et~al.(2026)Wang, Tian, Swann, Shorinwa, Wu, and
  Schwager]{wang2026phys2real}
M.~Wang, S.~Tian, A.~Swann, O.~Shorinwa, J.~Wu, and M.~Schwager.
\newblock {Phys2Real: Fusing VLM Priors with Interactive Online Adaptation for
  Uncertainty-Aware Sim-to-Real Manipulation}.
\newblock In \emph{IEEE International Conference on Robotics and Automation
  (ICRA)}, 2026.

\bibitem[Hu et~al.(2026)Hu, Mart{\'i}n-Mart{\'i}n, Stone, Duan, Jiang, and
  Shi]{hu2026simex}
J.~Hu, R.~Mart{\'i}n-Mart{\'i}n, P.~Stone, R.~Duan, Z.~Jiang, and G.~Shi.
\newblock {SimEX: Simulation-Integrated Robotics AutoResearch}.
\newblock \emph{arXiv preprint arXiv:2609.38982}, 2026.

\bibitem[Chen et~al.(2026)Chen, Xia, Peng, Zhang, Jing, Ma, Qian, Cheng, Jiao,
  Zhou, Qu, Ye, Zhang, Wang, Zeng, Chen, Yang, Zeng, Luo, Wang, Liu, Yuille,
  Shi, Zheng, Li, Jiang, and Chen]{chen2026agenticreal2sim}
G.~Chen, Q.~Xia, J.~Peng, H.~Zhang, P.~Jing, B.~Ma, J.~Qian, Y.~Cheng, Z.~Jiao,
  B.~Zhou, Y.~Qu, L.~Ye, K.~Zhang, K.~Wang, W.~Zeng, Y.~Chen, P.~Yang, Z.~Zeng,
  S.~Luo, H.~Wang, C.~Liu, A.~Yuille, F.~Shi, C.~Zheng, Y.~Li, C.~Jiang, and
  P.~Y. Chen.
\newblock {Agentic Real2Sim: Physics-based World Modeling with Vision-Language
  Agents}.
\newblock \emph{arXiv preprint arXiv:2607.19190}, 2026.

\bibitem[Li et~al.(2026)Li, Feng, Sun, Chen, Ren, Yang, Xu, Wu, Cheng, and
  Xu]{li2026agenticrsr}
Y.~Li, Y.~Feng, S.~Sun, J.~Chen, H.~Ren, B.~Yang, W.~Xu, Q.~Wu, H.~Cheng, and
  R.~Xu.
\newblock {Agentic RSR: Real-to-Sim-to-Real through Scene Reconstruction and
  Execution-Grounded Robot Policies}.
\newblock \emph{arXiv preprint arXiv:2610.10479}, 2026.

\bibitem[Huang et~al.(2023)Huang, Wang, Zhang, Li, Wu, and
  Fei-Fei]{huang2023voxposer}
W.~Huang, C.~Wang, R.~Zhang, Y.~Li, J.~Wu, and L.~Fei-Fei.
\newblock {VoxPoser: Composable 3D Value Maps for Robotic Manipulation with
  Language Models}.
\newblock In \emph{Conference on Robot Learning (CoRL)}, 2023.

\bibitem[Huang et~al.(2024)Huang, Wang, Li, Zhang, and Fei-Fei]{huang2024rekep}
W.~Huang, C.~Wang, Y.~Li, R.~Zhang, and L.~Fei-Fei.
\newblock {ReKep: Spatio-Temporal Reasoning of Relational Keypoint Constraints
  for Robotic Manipulation}.
\newblock In \emph{Conference on Robot Learning (CoRL)}, 2024.

\bibitem[Chu et~al.(2026)Chu, Barr, Goues, O'Hearn, Harman, Sarro, and
  Ye]{chu2026openrua}
Z.~Chu, E.~T. Barr, C.~L. Goues, P.~O'Hearn, M.~Harman, F.~Sarro, and H.~Ye.
\newblock {OpenRUA: Robot-Use Agents Are Zero-Shot Visuomotor Policies}.
\newblock \emph{arXiv preprint arXiv:2610.02459}, 2026.

\bibitem[Tsui et~al.(2026)Tsui, Fang, and Hwu]{tsui2026faea}
B.~Y. Tsui, A.~Y. Fang, and T.~J. Hwu.
\newblock {Demonstration-Free Robotic Control via LLM Agents}.
\newblock In \emph{IEEE/RSJ International Conference on Intelligent Robots and
  Systems (IROS)}, 2026.

\bibitem[Sur{\'\i}s et~al.(2023)Sur{\'\i}s, Menon, and
  Vondrick]{suris2023vipergpt}
D.~Sur{\'\i}s, S.~Menon, and C.~Vondrick.
\newblock {ViperGPT: Visual Inference via Python Execution for Reasoning}.
\newblock In \emph{IEEE/CVF International Conference on Computer Vision
  (ICCV)}, 2023.

\bibitem[Talwar et~al.(2026)Talwar, Nijs, Verleysen, and
  {wyffels}]{talwar2026localcoding}
R.~Talwar, E.~Nijs, A.~Verleysen, and F.~{wyffels}.
\newblock {Generalizing Manipulation Skills with a Local Coding Agent}.
\newblock \emph{arXiv preprint arXiv:2609.26499}, 2026.

\bibitem[Han et~al.(2026)Han, Hu, Qiu, Wu, and He]{han2026vista}
Q.~Han, K.~Hu, L.~Qiu, C.~Wu, and K.~He.
\newblock {VISTA: A Visual Harness for Reasoning in an Interactive World}.
\newblock \emph{arXiv preprint arXiv:2610.02200}, 2026.

\bibitem[Wei et~al.(2026)Wei, Tan, and Wang]{wei2026harnessevolution}
Z.~Wei, F.~Tan, and J.~Wang.
\newblock {Scale and Selection: What Makes Automatic Harness Evolution Work for
  Visual-Interface Robot Agents}.
\newblock \emph{arXiv preprint arXiv:2609.39304}, 2026.

\bibitem[Xu et~al.(2026)Xu, Shang, Dong, and Ferrara]{xu2026safeharness}
B.~Xu, Y.~Shang, Z.~Dong, and E.~Ferrara.
\newblock {Coding Agents with Harness for Safe Robot Control}.
\newblock \emph{arXiv preprint arXiv:2609.20822}, 2026.

\bibitem[Wang et~al.(2026)Wang, Jie, Yan, Heng, and Li]{wang2026regenharness}
K.~Wang, H.~Jie, Y.~Yan, Z.~Heng, and Z.~Li.
\newblock {RegenHarness: A Robot Agent Harness with Evidence-Gated Recursive
  Self-Improvement}.
\newblock \emph{arXiv preprint arXiv:2609.27612}, 2026.

\bibitem[Kambhampati et~al.(2024)Kambhampati, Valmeekam, Guan, Verma, Stechly,
  Bhambri, Saldyt, and Murthy]{kambhampati2024llmmodulo}
S.~Kambhampati, K.~Valmeekam, L.~Guan, M.~Verma, K.~Stechly, S.~Bhambri,
  L.~Saldyt, and A.~Murthy.
\newblock {Position: LLMs Can't Plan, But Can Help Planning in LLM-Modulo
  Frameworks}.
\newblock In \emph{International Conference on Machine Learning (ICML)}, 2024.

\bibitem[Hong et~al.(2026)Hong, Wei, and Zhang]{hong2026bridging}
W.~Hong, L.~Wei, and D.~Zhang.
\newblock {Bridging Semantics and Physics with Constrained LLMs for Safe and
  Trustworthy Robotic Manipulation}.
\newblock In \emph{European Conference on Computer Vision (ECCV) Workshops},
  2026.

\bibitem[Zhou et~al.(2025)Zhou, Su, Chi, Zhang, Wang, Huang, Sheng, and
  Wang]{zhou2025codeasmonitor}
E.~Zhou, Q.~Su, C.~Chi, Z.~Zhang, Z.~Wang, T.~Huang, L.~Sheng, and H.~Wang.
\newblock {Code-as-Monitor: Constraint-aware Visual Programming for Reactive
  and Proactive Robotic Failure Detection}.
\newblock In \emph{IEEE/CVF Conference on Computer Vision and Pattern
  Recognition (CVPR)}, 2025.

\bibitem[Wabersich and Zeilinger(2021)]{wabersich2018psf}
K.~P. Wabersich and M.~N. Zeilinger.
\newblock {A predictive safety filter for learning-based control of constrained
  nonlinear dynamical systems}.
\newblock \emph{Automatica}, 129:\penalty0 109597, 2021.

\bibitem[Wang et~al.(2023)Wang, Xie, Jiang, Mandlekar, Xiao, Zhu, Fan, and
  Anandkumar]{wang2023voyager}
G.~Wang, Y.~Xie, Y.~Jiang, A.~Mandlekar, C.~Xiao, Y.~Zhu, L.~Fan, and
  A.~Anandkumar.
\newblock {Voyager}: An open-ended embodied agent with large language models.
\newblock \emph{arXiv preprint arXiv:2305.16291}, 2023.

\bibitem[Yao et~al.(2023)Yao, Zhao, Yu, Du, Shafran, Narasimhan, and
  Cao]{yao2023react}
S.~Yao, J.~Zhao, D.~Yu, N.~Du, I.~Shafran, K.~Narasimhan, and Y.~Cao.
\newblock {ReAct}: Synergizing reasoning and acting in language models.
\newblock In \emph{International Conference on Learning Representations
  (ICLR)}, 2023.

\bibitem[Shinn et~al.(2023)Shinn, Cassano, Gopinath, Narasimhan, and
  Yao]{shinn2023reflexion}
N.~Shinn, F.~Cassano, A.~Gopinath, K.~Narasimhan, and S.~Yao.
\newblock {Reflexion: Language Agents with Verbal Reinforcement Learning}.
\newblock In \emph{Advances in Neural Information Processing Systems
  (NeurIPS)}, 2023.

\bibitem[Lee et~al.(2026)Lee, Xu, Seely, Lee, Zaharia, and Tang]{lee2026rhsi}
H.~Lee, J.~Xu, J.~Seely, D.~Lee, M.~Zaharia, and Y.~Tang.
\newblock {Recursive Harness Self-Improvement}.
\newblock \emph{arXiv preprint arXiv:2607.15524}, 2026.

\bibitem[Xia et~al.(2026)Xia, Han, Wang, Chen, Zhuang, Lee, Huang, Yu, CuiZhu,
  Ming, Yao, Gokturk, Pfister, and Lee]{xia2026rrsi}
P.~Xia, R.~Han, Z.~Wang, Y.~Chen, Y.~Zhuang, Y.~Lee, C.~Huang, H.~Yu,
  Z.~CuiZhu, Y.~Ming, H.~Yao, B.~Gokturk, T.~Pfister, and C.-Y. Lee.
\newblock {RRSI: Regularized Recursive Self-Improvement of Agent Harnesses}.
\newblock \emph{arXiv preprint arXiv:2609.24972}, 2026.

\bibitem[Lang et~al.(2026)Lang, Lu, Sang, Luo, Chen, Li, Yu, Shi, and
  Wang]{lang2026embodirsi}
H.~Lang, H.~Lu, S.~Sang, H.~Luo, G.~Chen, Q.~Li, J.~Yu, Y.~Shi, and J.~Wang.
\newblock {EmbodiRSI: Recursive Self-Improvement for Data-Efficient Robot
  Adaptation}.
\newblock \emph{arXiv preprint arXiv:2609.38905}, 2026.

\bibitem[Song et~al.(2026)Song, Liang, Fu, Wang, Yang, and
  Liu]{song2026embodiedrsi}
P.~Song, Z.~Liang, K.~Fu, M.~Wang, J.~Yang, and S.~Liu.
\newblock {EmbodiedRSI: Active Continual Robot Learning Through
  Hypothesis-Guided Co-Evolution}.
\newblock \emph{arXiv preprint arXiv:2610.10498}, 2026.

\bibitem[Chen et~al.(2026)Chen, Wang, and Qu]{chen2026rsisurvey}
M.~Chen, L.~Wang, and B.~Qu.
\newblock {Recursive Self-Improvement in AI: From Bounded Self-Refinement to
  Autonomous Research Loops}.
\newblock \emph{arXiv preprint arXiv:2607.07663}, 2026.

\bibitem[Wang(2026)]{wang2026whatstops}
J.~Wang.
\newblock {What Stops Recursive Self-Improvement in Robotics? Lessons from 123
  Rounds of Agentic Skill Discovery}.
\newblock \emph{arXiv preprint arXiv:2609.31760}, 2026.

\bibitem[Cherian et~al.(2026)Cherian, Corcodel, Jain, and
  Romeres]{cherian2026llmphy}
A.~Cherian, R.~Corcodel, S.~Jain, and D.~Romeres.
\newblock {LLMPhy: Parameter-Identifiable Physical Reasoning Combining Large
  Language Models and Physics Engines}.
\newblock In \emph{International Conference on Artificial Intelligence and
  Statistics (AISTATS)}, 2026.

\bibitem[Zhang et~al.(2026)Zhang, Zhang, Ng, and Deng]{zhang2026worldevolver}
X.~Zhang, W.~Zhang, S.-K. Ng, and Y.~Deng.
\newblock {Self-Evolving World Models for LLM Agent Planning}.
\newblock In \emph{Findings of the Association for Computational Linguistics:
  EMNLP}, 2026.

\bibitem[Liang et~al.(2026)Liang, Li, Nguyen, Bunnapradist, Naim, Kodali,
  Merler, Li, Gopinathan, Liu, Pimpalkhare, Tenenbaum, Weller, Tavares, Silver,
  and Ellis]{liang2026empiric}
Y.~Liang, A.~Li, D.~Nguyen, E.~Bunnapradist, M.~Naim, S.~Kodali, M.~Merler,
  B.~Li, K.~Gopinathan, Y.~Liu, N.~Pimpalkhare, J.~B. Tenenbaum, A.~Weller,
  Z.~Tavares, T.~Silver, and K.~Ellis.
\newblock {EMPIRIC: Experiment-Driven Learning of Residual World Models for
  Robot Planning}.
\newblock \emph{arXiv preprint arXiv:2609.35047}, 2026.

\bibitem[Bar-Shalom and Tse(1976)]{barshalom1976caution}
Y.~Bar-Shalom and E.~Tse.
\newblock {Caution, Probing, and the Value of Information in the Control of
  Uncertain Systems}.
\newblock \emph{Annals of Economic and Social Measurement}, 5\penalty0
  (3):\penalty0 323--337, 1976.

\bibitem[Mesbah(2018)]{mesbah2018dual}
A.~Mesbah.
\newblock {Stochastic Model Predictive Control with Active Uncertainty
  Learning: A Survey on Dual Control}.
\newblock \emph{Annual Reviews in Control}, 45:\penalty0 107--117, 2018.

\bibitem[Lorenzen et~al.(2019)Lorenzen, Cannon, and
  Allg{\"o}wer]{lorenzen2019rmpc}
M.~Lorenzen, M.~Cannon, and F.~Allg{\"o}wer.
\newblock {Robust MPC with Recursive Model Update}.
\newblock \emph{Automatica}, 103:\penalty0 461--471, 2019.

\bibitem[Hu et~al.(2024)Hu, Isele, Bae, and Fisac]{hu2024dualshield}
H.~Hu, D.~Isele, S.~Bae, and J.~F. Fisac.
\newblock {Active Uncertainty Reduction for Safe and Efficient Interaction
  Planning: A Shielding-Aware Dual Control Approach}.
\newblock \emph{The International Journal of Robotics Research}, 43\penalty0
  (9):\penalty0 1382--1408, 2024.

\bibitem[Sadigh et~al.(2016)Sadigh, Sastry, Seshia, and Dragan]{sadigh2016info}
D.~Sadigh, S.~S. Sastry, S.~A. Seshia, and A.~D. Dragan.
\newblock {Information Gathering Actions over Human Internal State}.
\newblock In \emph{IEEE/RSJ International Conference on Intelligent Robots and
  Systems (IROS)}, pages 66--73, 2016.

\bibitem[Memmel et~al.(2024)Memmel, Wagenmaker, Zhu, Yin, Fox, and
  Gupta]{memmel2024asid}
M.~Memmel, A.~Wagenmaker, C.~Zhu, P.~Yin, D.~Fox, and A.~Gupta.
\newblock {ASID: Active Exploration for System Identification in Robotic
  Manipulation}.
\newblock In \emph{International Conference on Learning Representations
  (ICLR)}, 2024.

\bibitem[Milanese and Vicino(1991)]{milanese1991setmembership}
M.~Milanese and A.~Vicino.
\newblock {Optimal Estimation Theory for Dynamic Systems with Set Membership
  Uncertainty: An Overview}.
\newblock \emph{Automatica}, 27\penalty0 (6):\penalty0 997--1009, 1991.

\bibitem[Raue et~al.(2009)Raue, Kreutz, Maiwald, Bachmann, Schilling,
  Klingm{\"u}ller, and Timmer]{raue2009identifiability}
A.~Raue, C.~Kreutz, T.~Maiwald, J.~Bachmann, M.~Schilling, U.~Klingm{\"u}ller,
  and J.~Timmer.
\newblock {Structural and Practical Identifiability Analysis of Partially
  Observed Dynamical Models by Exploiting the Profile Likelihood}.
\newblock \emph{Bioinformatics}, 25\penalty0 (15):\penalty0 1923--1929, 2009.

\bibitem[Mookerjee et~al.(1985)Mookerjee, Bar-Shalom, and
  Molusis]{mookerjee1985turnoff}
P.~Mookerjee, Y.~Bar-Shalom, and J.~Molusis.
\newblock {Dual Control and Prevention of the Turn-Off Phenomenon in a Class of
  MIMO Systems}.
\newblock In \emph{Proceedings of the 24th IEEE Conference on Decision and
  Control}, pages 1888--1893, 1985.
\newblock \doi{10.1109/CDC.1985.268909}.

\bibitem[Si et~al.(2026)Si, Bi, Yang, Ni, Huang, Wang, Jiang, Wang, Li, Feng,
  Dong, and Zhou]{si2026pyrua}
R.~Si, J.~Bi, S.~Yang, R.~Ni, W.~Huang, Q.~Wang, S.~Jiang, D.~Wang, X.~Li,
  H.~Feng, Z.~Dong, and D.~Zhou.
\newblock {Fewer Tokens, Better Action: GPT-6 Astra Robot Agents with 14\%
  Higher Success Rate but 65\% Fewer Tokens}.
\newblock \emph{arXiv preprint arXiv:2610.01939}, 2026.

\bibitem[Black et~al.(2025)Black, Galliker, and Levine]{black2025rtc}
K.~Black, M.~Y. Galliker, and S.~Levine.
\newblock {Real-Time Execution of Action Chunking Flow Policies}.
\newblock In \emph{Advances in Neural Information Processing Systems
  (NeurIPS)}, 2025.

\bibitem[Karaman et~al.(2011)Karaman, Walter, Perez, Frazzoli, and
  Teller]{karaman2011anytime}
S.~Karaman, M.~R. Walter, A.~Perez, E.~Frazzoli, and S.~Teller.
\newblock {Anytime Motion Planning using the {RRT*}}.
\newblock In \emph{IEEE International Conference on Robotics and Automation
  (ICRA)}, pages 1478--1483, 2011.

\bibitem[Nguyen et~al.(2026)Nguyen, Jin, Pokorny, and Elmroth]{nguyen2026spo}
C.~Nguyen, S.~Jin, F.~T. Pokorny, and E.~Elmroth.
\newblock {Speculative Policy Orchestration: A Latency-Resilient Framework for
  Cloud-Robotic Manipulation}.
\newblock \emph{arXiv preprint arXiv:2603.19418}, 2026.

\bibitem[Leviathan et~al.(2023)Leviathan, Kalman, and
  Matias]{leviathan2023specdecoding}
Y.~Leviathan, M.~Kalman, and Y.~Matias.
\newblock {Fast Inference from Transformers via Speculative Decoding}.
\newblock In \emph{International Conference on Machine Learning (ICML)}, 2023.

\end{thebibliography}
\endgroup

\clearpage
\appendix

\section{The Harness}
\label{app:harness}
The harness applies programmed rules, while the agents choose reconstructions, parameter estimates, trajectories and expectations.

\paragraph{Context and tools.}
Each agent keeps a long-running conversation and workspace. When its context reaches the configured limit, it resumes in a fresh conversation from its notes and a handoff summary, with its workspace intact.
Agent commands run in a sandbox without network or direct robot access, and structured outputs are validated against their schemas before use.
\sense{} writes measurement and reconstruction code and uses a physics-check tool. \plan{} simulates candidate trajectories, inspects twin states and rendered views, replays recorded controls, compares residuals and asks \sense{} questions without requesting a full reconstruction.
Physics rollouts run inside the sandbox, and the harness brokers rendering and the question channel.
Observations, reconstruction scripts and estimates pass from \sense{} to \plan{}, and planning statements and revisions pass in the opposite direction.

\paragraph{Verification of revisions and execution.}
The harness validates that each revision names supported fields and selects revisions on the latest record as specified in Sec.~\ref{sec:problem}.
Parameter revisions undergo the replay check, while state corrections of moved bodies and articulated joints are applied without replay, since they synchronize rather than explain.
When the full revision fails, the harness accepts the individually passing changes if they also pass jointly (Algorithm~\ref{alg:spa}), or keeps the twin unchanged, and in either case it proceeds to gate the action plan.
Every committed chunk is then simulated in the updated twin from the measured robot state and must satisfy the robot limits and the planner's expectations.
The gate applies to the committed chunk, and the uncommitted remainder of the trajectory is checked when it is committed.
During execution, the monitor compares measured end-effector positions and gripper closures with their predictions, records alerts, and stops motion at the monitor stop threshold (App.~\ref{app:constants}).

\paragraph{Verification before continuation.}
A future chunk listed by \plan{} is gated without a new planning call when no reconstruction is requested, the preceding chunk executed in full, end-effector positions, orientations and gripper closures agree with their predictions within the continuation tolerances, no monitor alert occurred, and all expectations evaluable from robot measurements held. These measurable expectations cover wrist coordinates and gripper closure.
In addition, the depth residual of every object with comparable entries must not grow by more than its tolerance.
The listed chunk must then pass the same gate in the advanced twin from the current measured state. If any of these checks fails, \plan{} is called with the new evidence and the reason for the call.

\paragraph{Verification of completion.}
A completion proposal that passes the gate and finishes execution triggers a fresh observation and a completion check by \sense{}. Only this confirmation assigns a successful status. If \sense{} objects, its reasons are returned to \plan{} and the episode continues.

\paragraph{Execution records.}
Each round records its observation, twin, physics-check report, planning decision, proposed and accepted revisions, gate decision, and any executed controls with their starting state and measured response, together with the timing of every planning and perception call.
These records support online replay and offline refitting, and any recorded twin can be exported as a standalone MuJoCo scene.

\Needspace{24\baselineskip}
\section{Operational Tolerances and Budgets}
\label{app:constants}
\begin{table}[H]
\centering\footnotesize
\begin{tabular}{@{}llr@{}}
\toprule
Symbol & Meaning & Value \\
\midrule
 & max.\ end-effector step per action & 5\,cm, 0.35\,rad \\
 & end-effector deviation for monitor alert & 3\,cm \\
 & gripper closure deviation for monitor alert & 0.5 \\
 & end-effector deviation for monitor stop & 10\,cm \\
$\varepsilon^{\mathrm d}$ & depth tolerance at distance $z$ & $\max(0.02z,\kappa z^2)$ \\
$\kappa$ & depth noise coefficient, per observation & 0.014 to 0.10\,m$^{-1}$ \\
$\varepsilon^{\mathrm{ee}}$ & position tolerance from free-motion records & 6.6\,mm \\
$\varepsilon^{\mathrm g}$ & closure tolerance for revision comparison & 0.02 \\
 & reported closure resolution & 0.001 \\
$g_{\mathrm{held}}$ & closure deficit indicating held-open fingers & 0.02 \\
 & simulated displacement marking a body as moved & 1\,cm \\
 & orientation tolerance for continuation & 2$^\circ$ \\
 & candidate-rollout budget per round & 5 to 10 \\
 & question budget per round & 5 \\
$N_{\mathrm{sub}}$ & sub-agents per reconstruction & 4 \\
 & max.\ actions committed per round & 300 (${\approx}60$\,s) \\
$R$ & repairs after a refused plan & 2 \\
\bottomrule
\end{tabular}
\vspace{4pt}
\caption{Operational values and episode budgets. The rollout budget is fixed within an episode, and the depth coefficient is estimated per observation.}
\label{tab:constants}
\end{table}
Closure is normalized from 0 (open) to 1 (closed), and its comparison tolerance is larger than its reporting resolution to absorb replay and hardware discrepancies.
Position and closure deviations raise alerts, whereas only position deviation stops motion.

\section{Residuals and Conditional Consistency}
\label{app:error}

\paragraph{Motion and depth.}
For a record $\mathcal D$ with $H$ actions, let $p_{r,s}$ and $g_{r,s}$ be the measured end-effector position and normalized closure of arm $r$ after action $s$, $1\le s\le H$, and let $Z_n$ be the measured depth in camera $n$ after the chunk.
Hats denote predictions from the same recorded starting state and controls, dependence on $\mathcal D$ is suppressed, and norms are Euclidean.
The residuals are
\begin{align}
e^{\mathrm{ee}}_r&=\max_{s\le H}\lVert p_{r,s}-\hat p_{r,s}(\mathcal T)\rVert,
& e^{\mathrm g}_r&=\max_{s\le H}|g_{r,s}-\hat g_{r,s}(\mathcal T)|,\nonumber\\
e^{\mathrm d}_b&=\max_{n:\Omega_{b,n}(\mathcal T)\ne\varnothing}
\left|\operatorname*{median}_{\omega\in\Omega_{b,n}(\mathcal T)}
\big(Z_n(\omega)-\hat Z_n(\omega;\mathcal T)\big)\right|,
\label{eq:entries}
\end{align}
where $\Omega_{b,n}(\mathcal T)$ contains the pixels assigned to body $b$ in the rendered segmentation that also have valid measured depth.
A camera with an empty mask contributes no entry, and a body visible in no camera has no depth residual, so that missing observations neither support nor oppose a revision.
The depth statistic is the absolute value of a signed median. It is robust to outlier pixels and sensitive to a systematic offset along the viewing ray, which is the dominant error of a misplaced or mis-sized body. Lateral misalignment is captured through the change of the rendered mask, which depends on the candidate twin.

\paragraph{Tolerances.}
For depth at measured median distance $z$, the tolerance is $\max(0.02z,\kappa z^2)$.
The first term is the camera's stated relative accuracy, and the second is the stereo noise estimated from two depth frames $Z,Z'$ of the static scene. For each camera and observation, $\kappa$ is the 90th percentile over image cells of $|\operatorname{median}(Z-Z')|/(\sqrt2\,z^2)$.
Recorded coefficients range from 0.014 to 0.10\,m$^{-1}$, and the relative term is used alone when no estimate is available.
For revision acceptance, an object's tolerance is fixed from the camera with the largest baseline residual and its measured distance. At $z=0.5$\,m the relative term is 10\,mm, so depth residuals are compared at a resolution no finer than 10\,mm.

\paragraph{Resistance.}
A robot-only replay of the same controls gives free-motion positions $p^{\mathrm f}_{r,s}$ and closures $g^{\mathrm f}_{r,s}$.
Resistance is detected when $\lVert p_{r,s}-p^{\mathrm f}_{r,s}\rVert>\varepsilon^{\mathrm{ee}}$ or $g^{\mathrm f}_{r,s}-g_{r,s}>g_{\mathrm{held}}$, excluding deviations caused by different clamping under the per-action step limit.
Applying the same detector to measured and predicted motion gives first-resistance indices $s_r$ and $\hat s_r$, and the corresponding residual is
\begin{equation}
e^{\mathrm c}_r=\begin{cases}
\lVert p_{r,s_r}-\hat p_{r,\hat s_r}(\mathcal T)\rVert & \text{if both indices exist},\\
\text{overshoot along the free path} & \text{if only one exists},\\
0 & \text{otherwise},
\end{cases}
\label{eq:contact}
\end{equation}
where the overshoot is the largest distance by which the unresisted trajectory passes the resisting one along the free path during the resistance interval.
This entry uses the tolerance $\varepsilon^{\mathrm{ee}}$ and serves as a kinematic signature of contact onset.

\paragraph{Replay starting states.}
Replay starts from the snapshot stored with each executed chunk (Sec.~\ref{sec:problem}).
For comparisons on earlier records, the parameter revisions made since the snapshot are applied first, whereas corrections to bodies moved during that chunk and to articulated joints are omitted.
The latest replay endpoint is rendered against the latest observation.
This construction separates changes that explain a recorded transition from changes that synchronize the current state.

\paragraph{Finite coordinate scans.}
On request, the harness evaluates Eq.~\eqref{eq:open} on the endpoints of the candidate range, the perception estimate, the current estimate and the midpoint, with duplicates removed.
Motion residuals cover the chunks recorded since reconstruction, and depth is compared at the latest observation.
The reported interval spans the candidates that satisfy all tested comparisons and extends to the neighboring tested value on each side. If no candidate satisfies them, the full range is returned and marked as uninformative.

The scan is coordinate-wise, and coupled parameters show why this matters.
Let a measurement satisfy $y=\theta_1+\theta_2=1$, with candidate ranges $[0,1]$, nominal values $(0,0)$ and tolerance 0.1.
The set for $\theta_1$ with $\theta_2=0$ is $[0.9,1]$, and likewise for $\theta_2$, so their product predicts $y\ge1.8$ and excludes the consistent pair $(0.5,0.5)$.
\plan{} therefore treats $\widetilde{\mathcal W}_t$ as a source of candidate twins to test rather than as the joint set $\mathcal W_t$, and a measurement that depends on only one of the coupled parameters resolves such coupling.

\section{Proofs}
\label{app:dual}

\paragraph{Proof of Proposition~\ref{prop:rsi}.}
A chunk executes only when $\delta\neq\bot$ after the planning loop. The variable $\delta$ becomes a chunk only on the continuation line, which requires $\mathrm G(\hat{\mathcal T},\delta)=1$ and sets $\hat{\mathcal T}^+=\hat{\mathcal T}$, or inside the planning loop, where $\delta$ is reset to $\bot$ unless $\mathrm G(\hat{\mathcal T}^+,\delta)=1$. The executed chunk therefore passed $\mathrm G$ in $\hat{\mathcal T}^+$, the twin from which it executes.
\textsc{RSI} returns $\hat{\mathcal T}\oplus\Delta$ only if $A(\Delta)$, $\hat{\mathcal T}\oplus\Delta^{\mathrm{ind}}$ only if $A(\Delta^{\mathrm{ind}})$, and otherwise $\hat{\mathcal T}$, and $A$ evaluates Eq.~\eqref{eq:accept} on the replayable changes for the latest record $\bar{\mathcal D}$.
The only return of success follows $\textsc{Check}_\pi(\textsc{Observe}(),\ell)$, whose observation is taken after the last executed chunk. \qed

\paragraph{Proof of Lemma~\ref{lem:disc}.}
Let $y_i$ be the measured value. If both $\theta$ and $\theta'$ remained in $\mathcal W_{t+1}$, Eq.~\eqref{eq:belief} would give $d_i(y_i,\hat y_i(\theta;\mathcal D_t))\le\varepsilon_{i,t}$ and $d_i(y_i,\hat y_i(\theta';\mathcal D_t))\le\varepsilon_{i,t}$, and symmetry and the triangle inequality would give $d_i(\hat y_i(\theta;\mathcal D_t),\hat y_i(\theta';\mathcal D_t))\le2\varepsilon_{i,t}$, a contradiction. \qed

The pseudometric condition holds for the end-effector and closure entries of Eq.~\eqref{eq:entries}, which are maximum norms over the chunk. The depth entry, a signed median over a candidate-dependent mask, and the case-defined resistance entry of Eq.~\eqref{eq:contact} do not have this form. For them the lemma gives no guarantee, although they still constrain $\mathcal W_{t+1}$.

The lemma is worst-case, since it holds for every measured value whatever the noise. The factor $2$ is tight when each $d_i$ is induced by a norm. If the predictions differ by at most $2\varepsilon_{i,t}$ in every measured entry, the measurement midway between them in each entry keeps both hypotheses, so no exclusion is guaranteed. The lemma concerns exclusion from $\mathcal W_{t+1}$ and holds whatever the fidelity of the twin. Whether the excluded hypothesis is the wrong one depends on the error that no parameter value removes, which enters through $\gamma$ in App.~\ref{app:transfer}.
For the implemented scan, the same argument shows that, of two grid candidates that differ only in $\theta_k$, at least one fails the scan's test whenever their predictions differ by more than $2(\varepsilon_{i,t}+\bar e_{i})$, where $\bar e_i$ bounds the best grid candidate's residual. Because the reported interval is padded to the neighboring grid values, a failing candidate can still lie inside it.

\paragraph{Proof of Proposition~\ref{prop:probe}.}
Committing the sensitive chunk directly costs $T_P+T_A$ and, with probability $1-q$, an additional $C_{\mathrm{fail}}$, for an expected time $T_P+T_A+(1-q)C_{\mathrm{fail}}$.
Probing first costs $C_{\mathrm{probe}}$, after which the sensitive chunk succeeds by assumption, for a total of $C_{\mathrm{probe}}+T_P+T_A$. Both totals contain $T_P+T_A$, so the comparison reduces to $C_{\mathrm{probe}}$ against $(1-q)C_{\mathrm{fail}}$.
Probing is better exactly when $C_{\mathrm{probe}}<(1-q)C_{\mathrm{fail}}$, that is, when $q<1-C_{\mathrm{probe}}/C_{\mathrm{fail}}=q^\star$. Since $q^\star$ is affine in $C_{\mathrm{inc}}$ with slope $-1/C_{\mathrm{fail}}<0$, it is strictly decreasing. \qed

The comparison is deliberately minimal, and two extensions follow from the same computation. If the probe only raises the success probability from $q$ to $q'\in(q,1]$, probing pays exactly when $C_{\mathrm{probe}}<(q'-q)C_{\mathrm{fail}}$, that is, when $q<q'-C_{\mathrm{probe}}/C_{\mathrm{fail}}=q^\star-(1-q')$, so a partially informative probe lowers the threshold. If $L$ probes are needed, $C_{\mathrm{probe}}$ contains $L\,C_{\mathrm{inc}}$, and the threshold falls with slope $-L/C_{\mathrm{fail}}$, so an inexpensive incorporation matters $L$ times as much.
More generally, whenever the expected time of the probing branch increases with $C_{\mathrm{inc}}$ and that of the direct branch does not depend on it, the set of beliefs in which probing is preferred shrinks as $C_{\mathrm{inc}}$ grows. This monotonicity, rather than the particular threshold, is what Sec.~\ref{sec:theory} uses.

\section{A Sufficient Condition for Transfer}
\label{app:transfer}
The gate certifies a chunk in the nominal twin. The following condition states when the certified outcome also holds on the robot.
Let $\eta_m(\theta)$ be the predicted signed margin of task predicate $m$ along a fixed candidate trajectory, positive exactly when the predicate holds, and define the physical margin likewise.
With fixed positive scales $\sigma_m$ in the units of the margins, define dimensionless margins $f_m(\theta)=\eta_m(\theta)/\sigma_m$, and let $f_m^{\mathrm{real}}$ be the corresponding margin of the physical trajectory under the same controls.

\begin{proposition}[A sufficient condition for transfer]
\label{prop:transfer}
Let $B$ be a bounded box of parameter vectors and $\gamma\ge0$. Suppose that some $\theta^\star\in B$ satisfies $|f_m(\theta^\star)-f_m^{\mathrm{real}}|\le\gamma$ for every predicate, and that $B\cap\{\theta:\min_m f_m(\theta)>\gamma\}$ is convex.
If $\min_m f_m(\theta)>\gamma$ at every corner of $B$, all these predicates hold on the physical trajectory.
\end{proposition}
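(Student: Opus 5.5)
The argument has three steps: show that the whole box $B$ lies in the good set, evaluate at $\theta^\star$, and transfer through the $\gamma$-bound.

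\emph{Step 1: the box lies in the good set.} Write $S=B\cap\{\theta:\min_m f_m(\theta)>\gamma\}$. By hypothesis $S$ is convex. Every corner of $B$ lies in $B$ and satisfies $\min_m f_m>\gamma$, so every corner lies in $S$. A bounded box is the convex hull of its corners. Any convex set containing those corners therefore contains $B$, so $B\subseteq S$. Since $S\subseteq B$ by definition, in fact $S=B$. Hence $\min_m f_m(\theta)>\gamma$ for every $\theta\in B$.

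\emph{Step 2: evaluate at $\theta^\star$.} Because $\theta^\star\in B$, Step 1 gives $f_m(\theta^\star)>\gamma$ for every predicate $m$.

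\emph{Step 3: transfer.} The fidelity hypothesis $|f_m(\theta^\star)-f_m^{\mathrm{real}}|\le\gamma$ then yields
\[
f_m^{\mathrm{real}}\ge f_m(\theta^\star)-\gamma>0 .
\]
Since $\sigma_m>0$, the physical signed margin is $\sigma_m f_m^{\mathrm{real}}>0$. By the definition of the signed margin, predicate $m$ holds on the physical trajectory. This holds for every $m$, which proves the claim.

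\emph{Main obstacle.} The only nontrivial point is Step 1, namely that corner checks suffice. It rests entirely on the assumed convexity of the superlevel set intersected with $B$, together with the fact that a box equals the convex hull of its vertices. The statement's phrase "bounded box" implies a finite product of compact intervals, so there are finitely many corners. No continuity or monotonicity of $f_m$ is needed beyond what convexity supplies.

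A remark could note that without the convexity assumption the conclusion fails, since $f_m$ may dip inside $B$ between corners. This is why the hypothesis is stated on the set itself rather than derived from properties of the simulator.
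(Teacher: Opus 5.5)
Your proof is correct and follows the paper's argument: the corners lie in the convex set $B\cap\{\theta:\min_m f_m(\theta)>\gamma\}$, so their convex hull $B$, and hence $\theta^\star$, lies in it too, and the $\gamma$-bound then gives $f_m^{\mathrm{real}}>0$. You read ``bounded box'' as a product of compact intervals, so that the corners belong to $B$, and the paper's proof makes the same assumption implicitly.
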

\begin{proof}
The convex set contains every corner of $B$ and hence their convex hull, which for a bounded box is $B$ itself. It therefore contains $\theta^\star$.
Consequently $f_m(\theta^\star)>\gamma$ for every $m$, and the discrepancy bound gives $f_m^{\mathrm{real}}>0$.
\end{proof}
The proposition separates the two sources of transfer error. The box $B$ accounts for parameter uncertainty, for example a box containing $\mathcal W_t$ when $\mathcal W_t$ contains $\theta^\star$, and $\gamma$ accounts for the error that remains at $\theta^\star$, such as unmodeled dynamics, which no parameter revision removes.
Checking the corners of $B$ is an optional robustness test available to \plan{}. Convexity is an assumption of the proposition rather than a property guaranteed by the simulator, and predicates without a signed margin fall outside the argument.

\section{Reducing Latency}
\label{sec:spec}
Since model inference dominates the critical path of Eq.~\eqref{eq:cost}, latency is reduced by issuing fewer and shorter requests~\citep{si2026pyrua,galbot2026astra} and by overlapping the work of the two modules.
Local revision avoids a reconstruction whenever a state or parameter change explains the evidence, and the two extensions below parallelize the remaining work.

\paragraph{Parallel perception.}
\sense{} can divide a reconstruction among up to $N_{\mathrm{sub}}$ sub-agents, for example by camera, region or object.
The sub-agents receive the observation and the lead's workspace and return partial twins, which the lead merges, recording disagreements as uncertain quantities.
With sufficient compute, the perception critical path becomes the longest concurrent branch plus the merge.

\paragraph{Planning ahead of perception.}
Related systems overlap inference with motion or speculate on future computation~\citep{black2025rtc,karaman2011anytime,nguyen2026spo,leviathan2023specdecoding}.
In the same spirit, \plan{} can begin from a draft or predicted twin while \sense{} reconstructs, provided that the gate uses the latest completed twin, including accepted revisions, rather than the draft.

\begin{proposition}[Gating a speculative proposal]
\label{prop:spec}
If every speculative proposal is rechecked in the latest completed twin at the state used for execution, an executed chunk satisfies the same nominal gate conditions as a sequentially planned chunk.
\end{proposition}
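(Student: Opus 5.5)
The argument is by inspection of the modified loop, in the same way as Proposition~\ref{prop:rsi}, after making precise what "the same nominal gate conditions" means. First I would spell out the conditions a sequentially planned chunk meets under Algorithm~\ref{alg:spa}. By Proposition~\ref{prop:rsi}, such a chunk has passed $\mathrm G(\hat{\mathcal T}^+,\delta)$, where $\hat{\mathcal T}^+$ is the latest completed twin after revision selection. Simulated from the measured robot state at which execution starts, the chunk respects the robot limits and satisfies its expectations $\mathcal P_t$. If it declares completion, its task predicates hold along the trajectory. The target is to show that a speculatively planned chunk meets exactly these conditions, with the same twin and starting state.

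Second, I would model speculation as an extra source of candidate decisions $\delta^{\mathrm{spec}}$. These are produced by \plan{} from a draft or predicted twin $\hat{\mathcal T}^{\mathrm{draft}}$ while \sense{} reconstructs. The hypothesis of the proposition adds a rule: before a speculative $\delta^{\mathrm{spec}}$ may be executed, the harness sets $\hat{\mathcal T}^+$ to the latest completed twin $\hat{\mathcal T}$, including \rsi{} revisions accepted under Eq.~\eqref{eq:accept}. It re-advances the twin to the measured state used for execution. It then evaluates $\mathrm G(\hat{\mathcal T}^+,\delta^{\mathrm{spec}})$ and replaces $\delta^{\mathrm{spec}}$ by $\bot$ on failure. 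The step is structurally identical to the gating line inside the planning loop of Algorithm~\ref{alg:spa}. Only the origin of $\delta$ differs, and $\mathrm G$ depends on $\delta$ and on the twin and state, not on how $\delta$ was produced.

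Third, I would redo the case analysis of the proof of Proposition~\ref{prop:rsi}. A chunk executes only if $\delta\neq\bot$. The variable $\delta$ can become non-$\bot$ in three ways: on the continuation line, inside the planning loop, or on the new speculative recheck line. Each of these leaves $\delta$ non-$\bot$ only when $\mathrm G=1$ in the twin from which the chunk executes, and at the measured starting state. The draft twin never appears as an argument of $\mathrm G$, so its infidelity cannot weaken the certificate. The revision and completion invariants are unaffected, since speculation changes neither \textsc{RSI} nor $\textsc{Check}_\pi$.

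The main obstacle is the phrase "latest completed twin at the state used for execution". The twin or the robot state might change between the recheck and execution, for instance through a reconstruction finishing or a revision being accepted in that window. I would handle this by requiring the recheck to be the last operation before $\textsc{Robot}(\delta.u)$, with no twin update in between. Concretely, any completion of \sense{} or new acceptance invalidates pending rechecks and forces a repeat. This is the reading of the hypothesis under which the claim follows.
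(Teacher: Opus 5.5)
Your argument is correct and is essentially the paper's: the speculative chunk passes the very same gate $\mathrm G$, evaluated in the latest completed twin (with accepted revisions) at the execution state, and since $\mathrm G$ depends only on the twin, the state and $\delta$, the certificate coincides with that of a sequentially planned chunk. You guard the window between recheck and execution by invalidating pending rechecks, whereas the paper notes that holding the robot still during reconstruction aligns the measured starting state with the one the plan assumed; both serve the same purpose.
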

This follows from applying the same gate after reconstruction, and holding the robot still during reconstruction aligns its measured starting state with the one assumed by the plan.
With sensing, planning and motion durations $T_S$, $T_P$ and $T_A$, and the same planning work in both schedules, overlapping perception and planning costs $\max(T_S,T_P)+T_A+T_R$, where $T_R$ is any additional repair time, compared with $T_S+T_P+T_A$ sequentially. Since $T_S+T_P-\max(T_S,T_P)=\min(T_S,T_P)$, the overlap is faster exactly when $T_R<\min(T_S,T_P)$.

\end{document}